\def\@biblabel#1{[#1]}
\newif\iffinal
\newtheorem{theorem}{Theorem}[section]
\newtheorem{lemma}{Lemma}[theorem]
\newtheorem{definition}{Definition}[theorem]
\newtheorem{example}{Example}[theorem]
\newcommand{\myorcid}[1]{\unskip\texorpdfstring{%
\href{https://orcid.org/#1}{\includegraphics[width=10px]{orcid.pdf}}}{}}
\begin{document}

\title{Online Test Synthesis From Requirements:\\ Enhancing Reinforcement Learning \\ with Game Theory}

\date{}
\author[1]{Ocan Sankur}
\author[1]{Thierry~J\'eron}
\author[1]{Nicolas~Markey}
\author[2]{David~Mentr\'e}
\author[3]{Reiya~Noguchi}

\affil[1]{Univ Rennes, Inria, CNRS, Rennes, France, \url{firstname.lastname@inria.fr} }
\affil[2]{Mitsubishi Electrics R\&D Centre Europe, Rennes, France, \url{initial-of-firstname.lastname@fr.merce.mee.com}}
\affil[3]{Mitsubishi Electric Corporation, Tokyo, Japan, \url{lastname.firstname@ah.MitsubishiElectric.co.jp}}

\maketitle
\begin{abstract}
  We~consider the automatic online synthesis of black-box  test cases from functional requirements specified as automata for reactive implementations.    
  The goal of the tester is to reach some given state, so as to satisfy a coverage criterion,
  while monitoring the violation of the requirements.
  We develop an approach based on Monte Carlo Tree Search, which is a classical
  technique in reinforcement learning for efficiently selecting promising inputs. 
  Seeing the automata requirements as a game between the implementation and the tester,
  we develop a heuristic by biasing the search towards inputs that are promising in this game.
  We experimentally show that our heuristic accelerates the convergence of the Monte Carlo Tree Search algorithm,
  thus improving the performance of testing.
\end{abstract}

\section{Introduction}
Requirement engineering and testing are two important and related phases in the development process.
Indeed, test cases are usually derived from functional specifications and documented with the requirements they are supposed to check.
Several existing tools allow one to automatically generate test cases from formal functional specifications  and/or formal requirements (see~\eg, the tools~\href{https://www.t-vec.com/}{T-VEC}~\cite{blackburn1996t}, \href{https://torxakis.org/}{TorXakis}~\cite{tretmans2019model} or
\href{https://www.3ds.com/products/catia/stimulus}{Stimulus}~\cite{jeannet2016debugging}, and  surveys~\cite{utting2012taxonomy} and~\cite{li2018survey}).%
This helps the development process since the developer can
focus on writing formal specifications or requirements, and test cases are generated automatically.
There are two main approaches for test generation: \emph{black-box testing} focuses on generating tests without having access to the system's internals
such as its source code (see~\eg,~\cite{beizer1995black}), while \emph{white-box testing} generates tests by analyzing its source code (see~\eg,~\cite{myers2004art,ammann2017introduction}).

\paragraph{Black-Box Testing.} In~this paper, we are interested in automatically synthesizing online test cases from a set of requirements and an implementation under test
in a black-box setting. By~\emph{black-box}, we mean that the implementation internal is unknown to the tester (or at least not used), only its interaction with her is used.
By~\emph{online}, we mean that the synthesis is essentially performed during execution, while interacting with the implementation.
We~consider implementations that are reactive: these are programs that alternate between reading an input valuation and writing an output valuation.
This setting is interesting for modeling synchronous systems, e.g., controllers for manufacturing systems~\cite{bolton2015programmable}.
The~considered requirements are given as automata recognizing sequences of valuations of input and output variables.
The~\emph{conformance} of an implementation to a set of requirements is formalized as the absence of input-output valuation sequences generated by the implementation
that are rejected by the requirements automaton.

The goal of the test cases is to drive the implementation to some particular state or show some particular behaviour where non-conformance is suspected to occur.
These are described by \emph{test objectives}.
They are typically derived from coverage criteria, \eg~state or transition coverage, or written from requirements~\cite{}. 
The~selection of the test objectives is out of the scope of this paper; we thus assume that these are given.

There are several black-box testing algorithms and tools in the literature.
The~closest to our approach is TorXakis~\cite{tretmans2019model,torxakis} which is a tool based on the {\bf ioco} testing theory~\cite{Tretmans96-SCT} and the previous TorX tool~\cite{tretmans2003torx}. It allows the user to specify the automata-based requirements reading input-output valuations as well as test objectives (called test purposes)
in a language based on process algebra, and is able to generate online tests interacting with a given implementation. These tests are performed by picking random input valuations, and observing the outputs from the implementation, while checking for non-conformance.
Because tests are performed using random walks in this approach, deep traces satisfying the test objective or violating the requirements are hard to find in practice.

\paragraph{Reinforcement Learning for Testing.}
This issue has been addressed in many works by interpreting the test synthesis in a \emph{reinforcement learning}(RL) setting~\cite{sutton2018reinforcement}.
Reinforcement learning is a set of techniques for computing strategies that optimize a given reward function based on interactions of an agent with its environment. It~has been applied to learn strategies for playing board games such as Chess and~Go~\cite{silver2016mastering}. Here the test synthesis is seen as a game between
the tester and the implementation: the former player selects inputs, and the latter player selects outputs. Because the implementation is black-box, the tester
is playing an unknown game, but can discover the game through interactions.
Using a game approach for test synthesis has long been advocated~\cite{Yannakakis2004};
and online testing for interface automata specifications were considered in \cite{VRC-fates2006} using RL. 

Because reaching a test objective is a 0/1 problem (an execution either reaches the objective and has a reward~1, or~does not reach~it and has reward~0),
RL~algorithms are usually very slow in finding deep traces.
The application of~RL to black-box testing thus requires the use of \emph{reward shaping}~\cite{ng1999policy}
which consists in assigning intermediate rewards to steps before the objective is reached; these are used to guide the search to more promising 
input sequences and can empirically accelerate convergence.
Reward shaping has been used for testing, \eg, in~\cite{DBLP:journals/stvr/KorogluS21} where an RL algorithm (Q-learning) was used for testing GUI applications with respect to linear temporal logic (LTL) specifications;
rewards were then computed based on transformations on the target LTL formula.

\paragraph{Contributions.}
Although reinforcement learning helps one to guide the search towards the test objective,
these methods can still be slow in finding traces satisfying the test objective,
especially when the number of input bits is high, and when the traces to be found are long.

In this work, we target improving the performance of black-box online test algorithms based on reinforcement learning.
More precisely, we develop heuristics for a Monte Carlo Tree Search (MCTS)
algorithm applied in this setting, based on a game-theoretic analysis
of the requirement automaton, combined with an appropriate reward shaping scheme.

Monte-Carlo Tree Search~\cite{coulom2006efficient} (see also \eg, the survey~\cite{browne2012mcts}) is a RL technique to search for good
moves in games. It~consists in exploring the available moves
randomly, while estimating the average reward of each newly-explored move
and updating the reward estimates of previously selected moves.
More precisely, MCTS builds a weighted tree of possible plays of the game,
while the decision of which branch to explore at each step is based on a random selection
appropriately biased to select unexplored moves but also moves with high reward estimates.
The~\emph{tree policy} is the policy used for exploring the branches of the constructed tree, while the \emph{roll-out policy}
is used to run a long execution to estimate the overall reward.

Our main contribution is a heuristic for biasing both the tree policy and the roll-out policy in MCTS in order to
reach test objectives faster, while maintaining convergence guarantees.
The heuristic is based on a \emph{greedy} test strategy computed as follows.
We adopt the game-theoretic view and see the testing process as a game played on the requirement automaton state space.
At any step, when the tester provides an input, we assume that the implementation can also answer with any output. This defines a zero-sum game:
the tester has the objective of reaching the test objective, and the implementation has the objective of avoiding~it.
We~first consider \emph{winning} strategies in this game: if there is a strategy for the tester which prescribes inputs
such that the test objective is reached no matter what the implementation outputs, then this strategy is guaranteed to reach the test objective.
However, in general, there are no winning strategies from all states. In this case, we~compute winning strategies for the tester to reach 
so-called \emph{cooperative} states, from where \emph{some} output that the implementation can provide reduces the distance to the test objective in the requirement automaton. This is an optimistic strategy: if~the~implementation ``cooperates'', that is, provides the right outputs at cooperative states, this guarantees the reachability of the test objective; but otherwise, no guarantee is given.
The greedy strategy consists in selecting uniformly at random inputs that are part of a winning strategy if any, or allow the implementation to cooperate.

Our variant of the MCTS algorithm uses the standard UCT algorithm~\cite{kocsis2006bandit} as a tree policy, but restricted to those inputs that are part of the greedy strategy at the first $M$ visits at each node of the tree; after the $M$-th visit to a node, the~UCT policy is applied to the set of all inputs.
Moreover, the roll-out policies use $\epsilon$-greedy strategies,
which consists in selecting inputs uniformly at random with probability~$\epsilon$, and using the greedy strategy with probability $1-\epsilon$,
at each step. This corresponds to restricting the input space of the tree policy to only those that appear in the greedy strategy for a bounded number of steps:
this helps the algorithm focus on most promising inputs rather than starting to explore randomly all input combinations. In practice,
this helps to guide the search quickly towards the test objective, or to states that are nearby.
The algorithm does eventually explore all inputs (after $M$ steps at a node) but at each newly created node, it~again starts trying the greedy inputs.
We also use reward shaping based on the distance remaining to the test objective inspired by~\cite{camacho2019ltl}:
we give a state a high reward if its shortest path distance to the objective state in the requirement automaton is small.

We implemented the computation of greedy strategies,
and its combination with MCTS. We~present a small case study for which
the combination of MCTS with greedy strategies allows one to reach the test objective,
while plain MCTS or greedy strategies alone fail to find a solution. 

\paragraph{Related Works.}
The notion of cooperative states have been used before in the setting of testing.
These were used in~\cite{HJM18} in the context of offline test synthesis from timed automata, already inspired by a previous approach of test generation using games for transition systems in an untimed context~\cite{Ramangalahy98}.
In~\cite{DLLN-MBT08}, in the context of timed systems, the~authors rely on cooperative strategies to synthesize test cases when the cooperation of the system is required for winning. 
However, these yield incomplete testing methods (a~reachable test objective is not guaranteed to be found),
or~completeness is obtained by making strong assumptions on the implementations; the~novelty of our approach is to obtain a complete method by using these notions
in reinforcement learning.
A~discussion on model-based testing techniques can be found in~\cite{Veanes2008}.

Several test generation tools based on the {\bfseries ioco} conformance relation for input-output labelled transition systems~(IOLTS) have been developed. Roughly, an~implementation ioco-conforms to its specification if after any of its observed behaviour that is also a specification behaviour,
the implementation only produces outputs or quiescences that are also possible in the specification.
The~tools TGV~\cite{jard2005tgv} and TESTOR~\cite{MMS-tacas18} generate off-line test cases from formal specifications in various languages with IOLTS semantics, driven by test purposes described by automata. 
The tools JTorx~\cite{belinfante2010jtorx} and TorXakis~\cite{torxakis} are 
improvement of TorX~\cite{tretmans2003torx} and allow to generate and execute online test cases derived
from various specification languages.
In~the context of timed models, 
Uppaal-TRON~\cite{hessel2008testing} is an online test generation tool for timed automata based on the real-time extension {\bfseries rtioco} of the {\bfseries ioco} conformance relation.

\cite{MPRS-icse11} uses Q-learning to produce tests for GUI applications but without a model for the specification:
the objective is to reach a large number of visually different states.
RL-based testing for GUI has attracted significant attention. \cite{adamo2018reinforcement}~uses Q-learning with the aim of covering as many states as possible; see~also~\cite{LPSLY-ase22}.
In~\cite{reddy2020quickly}, reinforcement learning is used to compute \emph{valid} inputs for testing programs:
these consist in producing inputs that satisfy the precondition of a program to be tested so that assertions can be checked.
\cite{THMT-ase2021} uses RL to learn short synchronizing sequences, where rewards correspond to the size of the
powerset of states.
In \cite{PZAdSL-date2020}, reinforcement learning was used to test shared memory programs.
MCTS has been used for testing in various settings. In~\cite{ariyurek2020enhancing},
it is used for testing video games using rewards to cover different areas in the game,
but without automata specifications.
Deep reinforcement learning has also been used for Android testing; see e.g.~\cite{RMCT-acm2021}.
\cite{FCP-mcs23} combines blackbox testing and model learning in order to improve coverage.

Reward shaping for automata-based specifications has been considered for Monte Carlo Tree Search.
In \cite{camacho2019ltl}, the approach is based on the distance to accepting states of B\"uchi automata;
and in~\cite{velasquez2021dynamic}, the authors collect statistics on the success for all transitions on the specification automaton.
The latter approach is not adapted to our case, where the goal is to find a single successful execution, and not to actually learn the
optimal values at all states. 

\section{Preliminaries}
\label{section:preliminaries}

We first introduce traces that represent observable behaviours of reactive systems,  
then the automata models that recognize such traces and are used to formally specify requirements of such systems, together with related automata based notions.

\paragraph{Traces.}
We fix a set of atomic propositions~$\AP$, partitioned into
$\AP^{\inp} \uplus \AP^{\out}$, that represent Boolean input- and output
variables of the system.
A~\emph{valuation} of~$\AP$ is an element $\vAP$ of~$2^\AP$ determining the set of atomic propositions which are true
(or~equivalently, it~is a mapping~$\vAP\colon \AP \rightarrow\{\top,\bot\}$).  We~denote
by~$\vAP^\inp$ (respectively~$\vAP^\out$) the~projections of~$\vAP$
on~$\AP^{\inp}$ (resp.~$\AP^{\out}$) such that $\vAP= \vAP^{\inp} \uplus
\vAP^{\out}$.  We~write~$\BC(\AP)$ for the set of Boolean combinations
of atomic propositions in~$\AP$.  That a valuation~$\vAP$ satisfies a
formula~$\phi\in\BC(\AP)$, denoted by~$\vAP\models \phi$, is~defined
in the usual~way.

We~consider reactive systems that work as a succession
of atomic steps: at~each~step, the~environment first sets an input
valuation~$\vAP^{\inp}$, then the system immediately sets an output
valuation~$\vAP^{\out}$.
The~valuation observed at this step is
thus~$\vAP = \vAP^{\inp} \uplus \vAP^{\out}$.
A~\emph{trace} of the system is a sequence $\sigma= {\vAP}_1\cdot{\vAP}_2\cdots {\vAP}_n$ of input and output valuations.

Internal variables may be used by the system to compute outputs from
the inputs and the internal state, but these are not observable to the outside.

\paragraph{Automata.}
We use automata to express requirements, and as models for the
implementations under test.
When considered as requirements, 
they monitor the system
through the observation of the values of the Boolean variables.
Transitions of automata are guarded with Boolean constraints on~$\AP$
that need to be satisfied for the automaton to take that   transition. For
convenience, we~handle input- and output valuations separately. 
Formally,
\begin{definition}\label{def-automata}
  An \emph{automaton} is a tuple $\calA=\tuple{S=S^\inp \uplus S^\out,
    s_\init, \AP, T, F}$ where $S$~is a finite set of states, $s_\init
  \in S^\inp$~is the initial state, $T \subseteq (S^\inp \times
  \BC(\AP^\inp) \times S^\out) \uplus (S^\out \times \BC(\AP^\out)
  \times S^\inp)$ is a finite set of transitions, and $F \subseteq S$
  is the set of accepting states.
\end{definition}

For two states~$s^\inp$ and~$s'^\inp$ and a valuation~$\vAP$, 
we write $s^\inp \fleche{\vAP} s'^\inp$ when there exist
a state~$s^{\out}$ and transitions ${(s^\inp,g^\inp,s^\out)}$ and
$(s^\out, g^\out,s'^\inp)$ in~$T$ such that ${\vAP^\inp \models
  g^\inp}$ and ${\vAP^\out \models g^\out}$.

For~a~trace
$\sigma={\vAP}_1\cdot{\vAP}_2\cdots {\vAP}_n$ in $(2^\AP)^*$, we~write
$s^\inp \fleche{\sigma} s'^\inp$ if there are states
$s^\inp_0,s^\inp_1,\ldots s^\inp_n$ such that $s^\inp_0=s^\inp$,
$s^\inp_n=s'^\inp$, and for all $i\in [1,n]$, $s^\inp_{i-1}
\fleche{{\vAP}_i} s^\inp_i$.  A~trace~$\sigma \in (2^\AP)^*$ is
accepted by~$\calA$ if
$s_\init \fleche{\sigma} s$ for some~$s\in F$.
We~denote by~$\Tr(\calA)$ the set of
accepted traces.

An automaton is \emph{input-complete} if from any
(reachable) state~$s^\inp$
and any valuation~$\vAP^\inp \in 2^{\AP^\inp}$,
there is a transition~$(s^\inp,g^\inp,s'^\out)$
in~$T$ such that $\vAP^\inp\models g^\inp$.
It~is \emph{output-complete} if a similar requirement holds for
states in~$S^\out$ and valuations in~$\vAP^{\out}$, and it is \emph{complete}
if it is both input- and output-complete. 
An~automaton is 
\emph{deterministic} when, for~any two transitions $(s,g_1,s_1)$ and
$(s,g_2,s_2)$ issued from a same source~$s$, if $g_1\et g_2$ is
satisfiable, then $s_1=s_2$.

In the rest of the paper, we will be mainly interested in states of~$S^\inp$,
since states in $S^\out$ are intermediary states that help us distinguish input
and output valuations.
For a state~$s^\inp$ and a valuation~$\vAP$, we~let
$\Post_{\calA}(s^\inp,\vAP)$ denote the set of states~$s'^\inp$ such
that $s^\inp \xrightarrow{\vAP} s'^\inp$, and
$\Pre_{\calA}(s^\inp,\vAP)$ denote the set of states~$s'^\inp$ such
that $s'^\inp \fleche{\vAP} s^\inp$. Notice that for deterministic
complete automata, $\Post_{\calA}(s^\inp,\vAP)$ is a singleton.

The~set of \emph{immediate predecessors} $\Pre_\calA(B)$
of a set $B\subseteq S^{\inp}$,
and the set of its \emph{immediate successors}
$\Post_\calA(B)$ are defined respectively as
\begin{align*}
  \Pre_\calA(B) &= \bigcup_{s^\inp \in B,  \vAP \in 2^\AP} \Pre_\calA(s^\inp,\vAP),\\
  \Post_\calA(B)&= \bigcup_{s^\inp \in B,  \vAP \in 2^\AP} \Post_\calA(s^\inp,\vAP).
\end{align*}
From these sets, one~can define the set of states from which $B$~is
reachable (\ie,~that are co-reachable from~$B$), as $\Pre_\calA^*(B)=
\lfp(\lambda X.(B  \, \cup \, \Pre_\calA(X)))$, and the set of states that are
reachable from $B$, $\Post_\calA^*(B)= \lfp(\lambda X.(B \, \cup \, \Post_\calA(X)))$,
where~$\lfp$ denotes the least-fixpoint operator.
The~fixpoint defining~$\Pre_\calA^*(B)$ is equivalent to 
\[
\begin{array}{l}
  B \cup \Pre_\calA(B) \cup \Pre_\calA(B\cup \Pre_\calA(B)) \\
  \qquad \cup \Pre_\calA\big(B\cup\Pre_\calA(B) \cup \Pre_\calA(B\cup \Pre_\calA(B))\big)\\
  \qquad \cup \cdots
\end{array}
\]
This correspond to an iterative computation of a
sequence~$(V_i)_{i\in\bbN}$, starting from $V_0=\emptyset$ (which is
why we get the \emph{least} fixpoint) and such that $V_{i+1}=B\cup
\Pre_\calA(V_i)$. Observe, by~induction on~$i$, that from any state~$s$
in~$V_i$, there is a path to~$B$ within~$i$
steps. The~sequence~$(V_i)_{i\in\bbN}$ is non-decreasing, and its
limit is the set of all states from which~$B$ can be reached: 
from each state $s \in \Pre_\calA^*(B)$, there is a finite
trace~$\sigma$ such that by reading~$\sigma$ from~$s$, one~ends
in~$B$.
Similarly, for~each state~$s' \in \Post_\calA^*(B)$, there exists a state
${s \in B}$ and a trace~$\sigma$ such that by reading~$\sigma$
from~$s$, one ends at~$s'$.
Safety automata form a subclass of automata having  a
distinguished set~$\error\subseteq S^\inp$ of error states that are
non-accepting and absorbing (\ie,~no~transitions leave~$\error$), and
complement the set~$F$ of accepting states in~$S$
(\ie,~$F=S \setminus\error$). %
Those automata describe safety properties: nothing bad happened as long as $\error$ is not reached.

The product of automata is defined as follows:
\begin{definition}
  Given automata
  $\calA_1=\tuple{S_1^\inp\uplus S_1^\out, s_{\init,1}, AP_1, T_1, F_1}$ and
  $\calA_2=\tuple{S_2^\inp\uplus S_2^\out, s_{\init,2}, AP_2, T_2, F_2}$,
  their \emph{product}
  $\calA_1 \otimes \calA_2$ is an automaton
  $\calA=\tuple{S, s_\init, AP, T, F}$ where
  $S=(S_1^\inp \times S_2^\inp) \uplus (S_1^\out\times S_2^\out)$,
  $s_\init= (s_{\init,1},s_{\init,2})$,
  $AP=AP_1 \cup AP_2$,
  $F= F_1 \times F_2$
  and the set of transitions is defined as follows:
  there is a transition $((s_1,s_2), g, (s'_1,s'_2))$ in~$T$ if there
  are transitions $(s_1, g_1, s'_1)$ in $T_1$ and $(s_2, g_2, s'_2)$
  in $T_2$ with $g=g_1 \et g_2$.
\end{definition}
Notice that this definition indeed yields an automaton in the sense of
Def.~\ref{def-automata}; and that completeness and
determinism are preserved by the product.  Moreover, the product of
two safety automata is a safety automaton: the set of accepting states is $F= F_1\times F_2$, so~the~set~$\error$ in the
product automaton is $(\error_1 \times S_2^\inp) \uplus
(S_1^\inp \times \error_2)$, and thus inherits absorbance.
The~product of automata is commutative and associative, and can thus
be generalized to an arbitrary number of automata.

We now consider an example of an automaton that will be used to illustrate other notions we define later in the paper.
\begin{example}
  Figure~\ref{fig-exaut} displays an example of an automaton.
  For the sake of readability, we use input- and output letters
  instead of atomic propositions. Here,
  $\{a,b\}$ are input letters, and $\{0,1\}$ are output letters.
  This automaton is deterministic;
  moreover, letting~$t$ be an $\error$ state makes it a safety automaton. It~could
  be made complete by adding looping input-output transitions (similar
  to the transitions to the bottom left of~$s_0$) on~$t$ and~$o$.
  
  \begin{figure*}[ht]
    \centering
    \begin{tikzpicture}[xscale=1.5]
      \draw (0,0) node[rond6] (s0in) {} node {$s_0$}; 
      \draw[latex'-] (s0in.135) -- +(135:3mm);
      \draw (2,0) node[rond6] (s1in) {$s_1$}; 
      \draw (4,0) node[rond6,double] (s2in) {} node {$o$}; 
      \draw (2,2) node[rond6] (sink) {} node {$t$};
      \draw (1,0) node[carre5] (s0a) {};
      \draw (3,1) node[carre5] (s1b) {};
      \draw (3,-1) node[carre5] (s1c) {};
      \draw (1.7,-1.2) node[carre5] (s1a) {};
      \draw (-0.8,-1.2) node[carre5] (s0bc) {};
      \begin{scope}[-latex']
        \draw
        (s0in) edge[-latex'] node[below] {$a$} (s0a)
        (s0in) edge[-latex',bend left] node[below] {$b,c$} (s0bc)
        (s1in) edge[bend left,-latex'] node[below] {$a$} (s1a)
        (s1in) edge[-latex'] node[below] {$b$} (s1b)
        (s1in) edge[-latex'] node[below] {$c$} (s1c)
        (s0a) edge[-latex'] node[above] {$0$} (s1in)
        (s1a) edge[-latex',bend left] node[left] {$0,1$} (s1in)
        (s0bc) edge[-latex',bend left] node[left] {$0,1$} (s0in)
        (s1b) edge[-latex',out=180,in=90] node[below] {$0$} (s0in)
        (s1c) edge[-latex',out=-100,in=-90] node[below] {$0$} (s0in)
        (s1b) edge[-latex'] node[above] {$1$} (s2in)
        (s1c) edge[-latex'] node[above] {$1$} (s2in)
        (s0a) edge[-latex'] node[above,pos=0.75] {$1$} (sink);
      \end{scope}
  
    \end{tikzpicture}
    \caption{Example of a (deterministic) automaton expressing requirements.}\label{fig-exaut}
  \end{figure*}
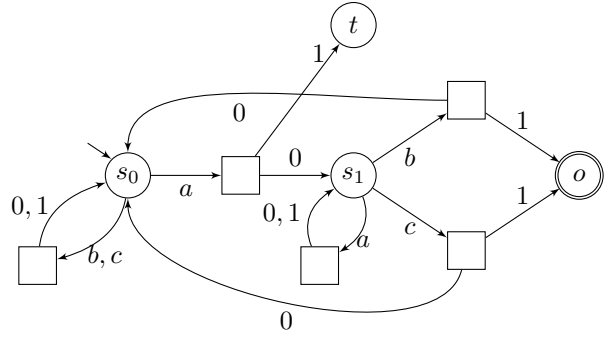
\end{example}

The next example shows how an automaton can be obtained from requirements written for a simple factory automation system.

\begin{example}[Factory Carriage Example]
  \label{example:carriage}
  We consider a controller program in a factory automation system depicted in Fig.~\ref{fig:carriage}.
  In this system, when the carriage is on the right end (\texttt{bwdlimit}) and receives a \texttt{cargo} on top of~it,
  the controller program must move the carriage forward (\texttt{movefwd}) 
  until it reaches the forward limit (\texttt{fwdlimit}).
  The controller must then push the arm for 3 seconds, and it can only back the carriage up (\texttt{movebwd}) 
  after this duration.
  \begin{figure}[ht]
    \centering
    \includegraphics[scale=0.35]{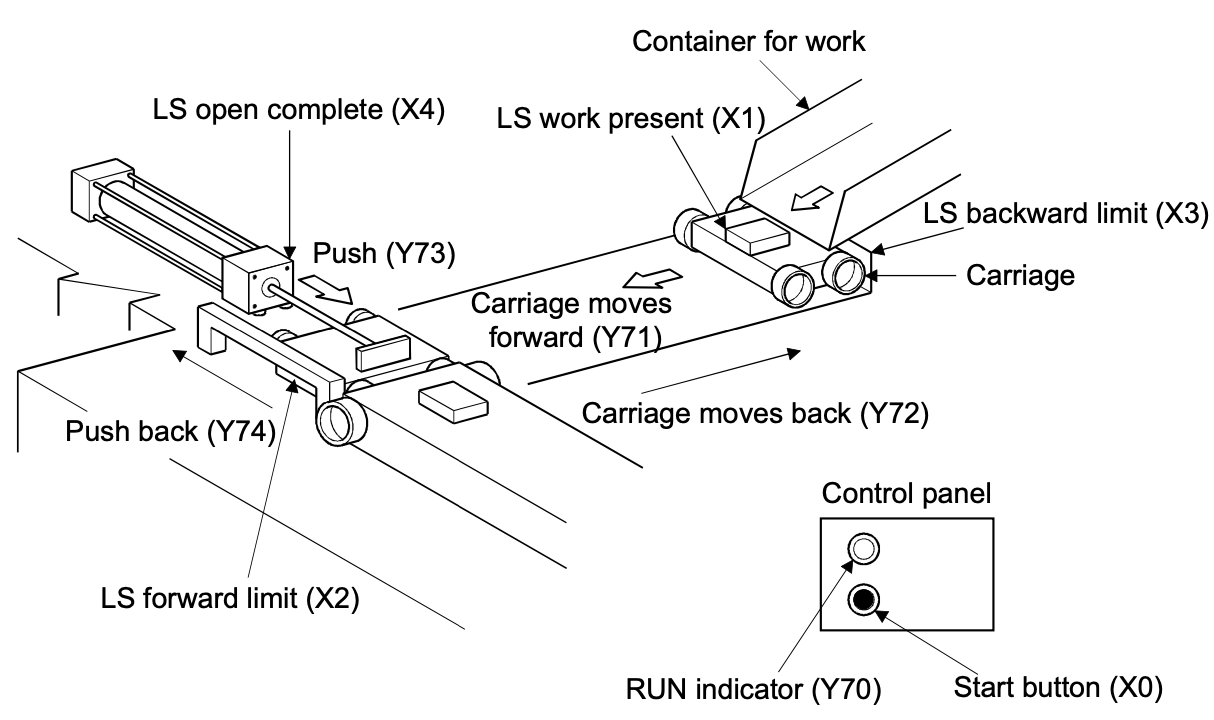}
    \caption{The carriage control system example from~\cite{PLC-mitsu}.}
    \label{fig:carriage}
  \end{figure}

  For the sake of this example, we only model the requirements concerning the first phase, that is, until the carriage reaches the forward limit.
  Three of these requirements on the controller program are given below.
  \begin{itemize}
    \item[R1] When the carriage is on its backward limit, and a cargo is present, then it immediately moves forwards until reaching the forward limit.
    \item[R2] If the carriage is not already moving forward, and if no cargo is  present or the carriage is not in the backward limit, then it is an error
    to move forward. The carriage must never be moved forward and backwards at the same time.
    \item[R3] When some cargo is present, and the carriage is at its forward limit, it~should stop moving forward immediately.
  \end{itemize}

  All these three requirements are modelled in the automaton of Fig.~\ref{fig:carriage-automaton}. 
  The initial state is~$s_0$, and we distinguish the state \textsf{err} which makes this a safety automaton.
  Intuitively, state $s_2$ is reached when the carriage receives a cargo and brings it successfully to the forward limit.

  \begin{figure*}[ht]
    \centering
    \begin{tikzpicture}[xscale=1.6, initial text={}]
      \draw (0,0) node[initial,rond6] (s0) {} node {$s_0$}; 
      \draw (-1.2,-1)  node[carre3] (s0out2) {} node {}; 
      \draw (1.2,-1)  node[carre3] (s0out3) {} node {}; 
      \draw (1.5,0)  node[carre3] (s0out1) {} node {}; 
      \draw (0,-2) node[rond6] (err) {} node {$\mathsf{err}$}; 

      \draw (2.5,0) node[rond6] (s1) {} node {$s_1$}; 
      \draw (4,0)  node[carre3] (s1out1) {} node {}; 
      \draw (4,-1) node[carre3] (s1out2) {} node {}; 
      \draw (2.5,-2) node[accepting,rond6] (s2) {} node {$s_2$};

      \begin{scope}[-latex']
        \draw (s0) edge node[above]{$\cargo \land \bwdlimit$} (s0out1);
        \draw (s0) edge node[left]{$\lnot \cargo \land \bwdlimit$} (s0out2);
        \draw (s0) edge node[right]{$\lnot \bwdlimit$} (s0out3);
        \draw (s0out1) edge node[above]{$\movefwd$} (s1);        
        \draw (s0out3) edge  node[above]{$\true$} (err);
        \draw (s0out2) edge node[below]{$\substack{\movefwd\\ \lor \movebwd}$~~} (err);
        \draw (s0out2) edge[bend right] node[right]{$\substack{\lnot \movefwd\\ \land \lnot \movebwd}$} (s0);
        \draw (s1) edge node[above]{$\lnot \fwdlimit$} (s1out1);
        \draw (s1out1) edge[bend left] node[below]{$\movefwd$} (s1);
        \draw (s1) edge[bend right] node[below]{$\fwdlimit$} (s1out2);
        \draw (s1out2) edge node[below]{$\substack{\lnot \movefwd\\ \land \lnot \movebwd}$} (s2);
      \end{scope}
      \draw (s0out1) -- (1.5,-1.5);
      \draw (1.5,-1.5) edge[left,-latex'] node[below]{$\lnot\movefwd$} (err);
    \end{tikzpicture}
    \caption{An automaton modeling requirements R1, R2, R3. For readability, we omitted some transitions in the figure:
    from all output states, there is an additional transition guarded by $\movebwd \land \movefwd$ to \textsf{err}.
    Moreover, from each of the two rightmost output states, the negation of the guard of the only leaving transition goes to \text{err} as well.
    For instance, from $s_1$, reading $\lnot \fwdlimit$ and then $\lnot \movefwd$, we end in \textsf{err}. Note that reading $\lnot \bwdlimit$ in $s_0$ is also an error since this is not supposed to happen in this system.}
    \label{fig:carriage-automaton}
  \end{figure*}
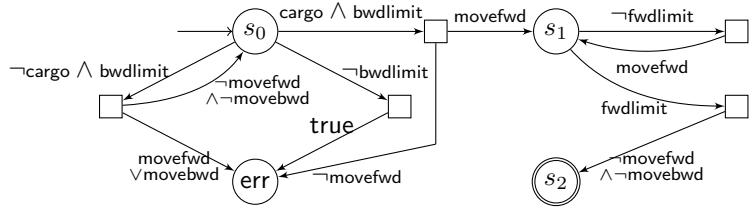

\end{example}

\section{Testing from requirements}
\label{sec:input}
We want to use testing to check whether a system implementation satisfies
its requirements.  We thus formalize a notion of conformance to a set
of requirements.

In the sequel, we~use automata to describe requirements, and as models
for implementations, with different assumptions.
We~identify a requirement with its complete deterministic safety
automaton, and write~$R$ for~both.
For~a~set~$\calR= \{R_i\}_{i\in  I}$ of requirements, each~specified by an automaton~$R_i$, we~denote
by $\calAR$ the product automaton ${\otimes_{i\in I} R_i}$.

\begin{definition}
  For any requirement $R$ defined by a complete deterministic automaton, and any finite
  trace~$\sigma$, we~write
  $\sigma \fails  R$ if running~$\sigma$ in~$R$ from its initial state enters its error set~$\error_R$.
\end{definition}

For a trace~$\sigma$ and a \emph{set} of requirements~$\calR$ %
we~write $\sigma \fails \calR$ %
to mean that~$\sigma \fails  \calAR$. %
Note the following simple facts, consequence of the definition of $\error$ states in the product:
if  ${\sigma \fails \calR}$ then  ${\sigma \fails R}$ for at least one $R$ in $\calR$; 
given $\calR' \subseteq \calR$,
for~any trace~$\sigma$,
if~${\sigma \fails \calR'}$, then ${\sigma \fails \calR}$.

We want to test a system against a set of requirements~$\calR$.
We~consider a deterministic system implementation~$I$
(the~implementation under~test), producing Boolean traces
in~$(2^\AP)^*$.
We~assume that this system is a black box that proceeds as follows:
at~each~step, an~input valuation in~$2^{\AP^\inp}$
is provided to the system by the tester, and the system answers by producing an
output valuation in~$2^{\AP^\out}$ (on~which the tester has no
control).  We~make the following assumptions on the implementation:
$I$~behaves as an unknown finite automaton over $\AP$; it is \emph{input-complete},
meaning that any valuation in $2^{\AP^\inp}$ can be set by the
tester~\footnote{This is not restrictive since an implementation
refusing some input valuations can be simulated by an input-complete
implementation that would set a dedicated output variable to true in
case of input refusal.}, and it is \emph{output-deterministic},
meaning that any state in~$S^\out$ has exactly one
transition\footnote{Notice how \emph{output-determinism} differs from
standard \emph{determinism}.}.
Last, we assume that from any input
state of~$I$, it is possible to reset~$I$ to the unique initial state
at any time.
These properties ensure that if one feeds the implementation with an input sequence 
from the initial state, then the system produces a unique output sequence.
We~denote by~$\calI$ the class of
all such implementations producing traces in~$(2^\AP)^*$.

The~behaviour of~$I$ is characterized by the set of all traces
produced by the interaction between the tester and the system.
Denote by~$\Tr(I)$ the set of traces that can be produced by~$I$.

We now define what it means for an implementation to conform to a set
of requirements:
\begin{definition}
  A system implementation $I \in \calI$ \emph{conforms} to a set of
  requirements~$\calR$ on~$\AP$ if for all $\sigma \in \Tr(I)$,
  it is not the case that $\sigma \fails \calR$.
\end{definition}

\paragraph{Test Objectives.}
In testing practice, each test case is related to a particular goal, e.g., derived from a coverage criterion. We formalise this now. 

\begin{definition}
  Consider a set $\calR$ of requirements, and let  $S_\calR$ denote the state space of~$\calAR$.
  A~\emph{test objective} is a set of states $O \subseteq S_\calAR^\inp$. 
  A~trace~$\sigma$ \emph{covers}
  a test objective~$O$, if~the unique execution of $\calAR$ on $\sigma$ ends in~$O$.
\end{definition}

Notice that the more general case where a test objective is an automaton $A$ with a set of accepting locations $\Acc$ can be reduced to this one~\footnote{This kind of automaton is sometimes called {\em test purpose} in the literature.}.
Indeed, it suffices to consider the product automaton of the test objective $A$ and the requirement automaton $\calAR$, and consider as objective $O$ the set of states of the product in which $A$ is in $\Acc$. 

The problem that we address in the rest of the paper is the following:
\begin{definition}{The Test Problem}
  \begin{description}
  \item[Input:]  %
    a requirement set~$\calR$, a test objective~$O$, and a deterministic implementation~$I$;
  \item[Output:] %
    a trace of~$I$ that covers~$O$, if such a trace exists.
  \end{description}
  An algorithm that solves this problem is called a \emph{test algorithm}.
\end{definition}
A test algorithm is \emph{complete} if for any input $\calR, O$, and $I$ that contains a trace that covers~$O$,
the algorithm returns a trace covering~$O$. It is \emph{almost-surely complete} if in such a case, it returns a trace covering~$O$ with probability 1.

The covering traces we are looking for are thus traces of $I$
that satisfy a given objective. These witness the fact that we have met a particular 
coverage criterion.
While
executing the covering traces of~$I$, the~tester will also check whether any
generated trace fails~$\calR$.
It will stop and report any such case.

Note that one can define some $\error$ states as test objectives. In this case, the testing process
will focus on generating traces that attempt to reach those errors, that is, on finding non-conformances.

In the next sections, we explain how to automatically synthesize test cases
that build such traces while checking conformance of the implementation to the set of requirements. 

\begin{example}
  In Example~\ref{example:carriage}, we consider a test objective which is the singleton $\{s_2\}$ of Fig.~\ref{fig:carriage-automaton}.
  In fact, reaching $s_2$ means that the implementation under test has made steps that are significant with respect to these requirements
  since this means that the carriage has successfully brought the cargo to the forward limit. 
\end{example}

\section{Baseline Test Algorithms}
Consider a set $\calR$ of requirements, 
specified by a deterministic complete automaton $\calAR=\tuple{S_\calR, \sinit^\calR, AP, T_\calR, F_\calR}$,
and an implementation whose behaviour could be modelled
as an input-complete, output-deterministic
finite automaton $I=\tuple{S_I, s_\init^I, AP, T_I, F_I}$.
Recall that in
our setting, the set $\calR$ of requirements, thus also its automaton $\calAR$, is known,
while the implementation~$I$ is considered to be a black box
to the tester.

Consider a particular test objective $O \subseteq S_\calAR^\inp$.
Let $\coreach(\calAR,O)=\Pre_{\calAR}^*(O)$ denote the set of input states of $\calAR$ from which $O$ is reachable.

Our aim is to design {online testing %
algorithms} that compute inputs to be given to
the implementation~$I$ in order to generate a trace that either
covers~$O$, or detects non-conformance by reaching an $\error$ state (or both if $O$ contains $\error$ states);
notice that since $I$ is output-deterministic, each such input sequence
defines a unique  trace of~$I$.
The~testing
process runs as follows: from a state~$s^\calAR$ of~$\calAR$ and a
state~$s^I$ of~$I$, {the~test algorithm} %
returns an input
valuation~$\vAP^\inp$; this input valuation is fed to the
implementation, which returns an output valuation~$\vAP^\out$ and moves
to a new state~$t^I$; the~resulting valuation~$\vAP^\inp\cup\vAP^\out$
moves the automaton~$\calAR$ from~$s^\calAR$ to a new state~$t^\calAR$.
The
process then continues from~$t^\calAR$ and~$t^I$, unless we detect that
a test objective or an $\error$ state is reached.

We~write $\calAR\otimes I$ for the
synchronized product of~$\calAR$ and~$I$: this is a deterministic
automaton, of which we observe only the first component (i.e., the part
corresponding to~$\calAR$), while we have no information and no observation
concerning the second component except from the produced outputs.
Our~aim is to build a tester to
cover some objectives in this \emph{partially-observable} deterministic
automaton.

Since we do not know~$I$, each input valuation~$\vAP^\inp$
should be selected only based on the trace generated so~far, and
possibly based on information collected on previous attempts. 

Before explaining how we define test algorithms, we introduce some
vocabulary to describe the configuration where the testing process
ends.  Assume that we have generated a trace~$\sigma$ by interacting
with~$I$ from its initial state.
Let~$s_\sigma$ denote the state of~$\calAR$ reached after
reading~$\sigma$ from the initial state.
Four cases may occur:
\begin{itemize}
  \item if $s_\sigma \in O$, then $\sigma$ is a \emph{covering trace} for~$O$;
  \item if $s_\sigma \in \error_{\calAR}$, then $\sigma$ is an \emph{error trace}; 
  \item if $s_\sigma \not \in \coreach(\calAR,O)$, then $\sigma$ is \emph{inconclusive};
  \item otherwise,  $s_\sigma \in \coreach(\calAR,O) \setminus O$ and $\sigma$ is \emph{active}.
\end{itemize}

Intuitively, in the first case, we have found the desired covering
trace, and we~can stop.  In~the second case, we~have found a trace failing
one of the requirements of~$\calAR$, and we can also stop: the
implementation does not conform to~$\calAR$.
Notice that these two cases are not exclusive since we can have $O \cap  \error_{\calAR}\neq \emptyset$.
In~the third case,
no~matter how we extend~$\sigma$, we~will never cover~$O$; so the
tester should stop and start again to look for another trace by
resetting~$I$. In~the last case, $\sigma$~is active in the sense that
it might still be possible to try to extend~$\sigma$ to reach the
objective. 

It should be clear that $\sigma$ being active (last case) does not
mean that $O$ is reachable from the corresponding state~$(s_\sigma,s^I)$ of the product~$\calAR\otimes I$, 
as this depends on the (unknown) implementation~$I$ being considered:
states in~$\coreach(\calAR,O)$ are those for which \emph{some}
implementation in $\calI$ can reach~$O$. We illustrate this in the following example.

\begin{example}
We consider the requirement expressed by the automaton of
Fig.~\ref{fig-exaut}, the~objective defined by the singleton $O=\{o\}$, and the implementation represented to the left
of Fig.~\ref{fig-eximpl}. Their product is represented to the right of
Fig.~\ref{fig-eximpl}.
There is a covering trace in this case since the input sequence $ab$ generates 
the trace~$(a0b1)$ in~$I_1$, and this reaches the state~$o$ in~$\calA_\calR$.

Assume now that $I_1$ is modified so that it outputs~$1$ on input~$a$
from~$s_0^I$, then
the product would go to a state of the form~$(t,s_0^I)$. If~$t$ is an
\error\ state, then the implementation does not conform to the
requirement; if~not, then the test is inconclusive since~$o$~is no longer reachable.

On the other hand, consider an implementation outputs~$0$ on any input. Then any trace
is an active trace although the implementation does not have a covering trace
(in fact, the product cannot reach a state involving~$o$).

\begin{figure*}[ht]
  \centering
  \begin{tikzpicture}[scale=1.5]
    \begin{scope}
      \path(-.5,1) node {$I_1$};
      \draw (0,0) node[rond6] (s0in) {} node {$s_0^I$}; 
      \draw[latex'-] (s0in.180) -- +(180:3mm);
      \draw (0,1) node[carre5] (s0a) {};
      \draw (1,0) node[carre5] (s0b) {};
      \draw (0,-1) node[carre5] (s0c) {};
      \begin{scope}[-latex']
        \draw
        (s0in) edge[bend left,-latex'] node[left] {$a$} (s0a)
        (s0in) edge[bend left,-latex'] node[above] {$b$} (s0b)
        (s0in) edge[bend left,-latex'] node[right] {$c$} (s0c)
        (s0a) edge[bend left,-latex'] node[right] {$0$} (s0in)
        (s0b) edge[bend left,-latex'] node[above] {$1$} (s0in)
        (s0c) edge[bend left,-latex'] node[left] {$0$} (s0in)
        ;
      \end{scope}
    \end{scope}
    \begin{scope}[xshift=3cm]
      \draw (0,0) node[oblong6] (s0in) {} node {$s_0,s^I_0$};
      \draw[latex'-] (s0in.135) -- +(135:3mm);
      \draw (2,0) node[oblong6] (s1in) {} node {$s_1, s_0^I$}; 
      \draw (4,0) node[oblong6,double] (s2in) {} node {$o, s_0^I$}; 
      \draw (1,0) node[carre5] (s0a) {};
      \draw (3,1) node[carre5] (s1b) {};
      \draw (3,-1) node[carre5] (s1c) {};
      \draw (1.7,-1.2) node[carre5] (s1a) {};
      \draw (-0.8,-1.2) node[carre5] (s0b) {};
      \draw (-1.2,-0.3) node[carre5] (s0c) {};
    \begin{scope}[-latex']
      \draw
      (s0in) edge[-latex'] node[above] {$a$} (s0a)
      (s0in) edge[-latex',bend left=10] node[below] {$b$} (s0b)
      (s0in) edge[-latex',bend left=10] node[below] {$c$} (s0c)
      (s1in) edge[bend left,-latex'] node[below right] {$a$} (s1a)
      (s1in) edge[-latex'] node[above] {$b$} (s1b)
      (s1in) edge[-latex'] node[above] {$c$} (s1c)
      (s0a) edge[-latex'] node[above] {$0$} (s1in)
      (s1a) edge[-latex',bend left] node[left] {$0$} (s1in)
      (s0b) edge[-latex',bend left=10] node[left] {$1$} (s0in)
      (s0c) edge[-latex',bend left=10] node[above left] {$0$} (s0in)
      (s1c) edge[-latex',out=-100,in=-90] node[below] {$0$} (s0in)
      (s1b) edge[-latex'] node[above] {$1$} (s2in)
      ;
    \end{scope}
    \end{scope}
  \end{tikzpicture}
  \caption{An implementation~$I_1$ and its product with the automaton of Fig.~\ref{fig-exaut}}\label{fig-eximpl}
\end{figure*}
\end{example}

We start by formalizing the naive uniform
test approach, and then cast the problem as a reinforcement-learning
problem.

\subsection{Naive Uniform Testing}
\label{section:uniform}
We present a simple test algorithm implemented in tools such as TorXakis~\cite{torxakis}.
Let $\coreachinp(\calAR,O)$ denote the set of pairs $(s,\vAP^\inp)$
where $s$ is a state of $\calAR$, and $\vAP^\inp$ is an input valuation
for which there exists some output valuation $\vAP^\out$ such that
$\Post_{\calAR}(s,\vAP) \in \coreach(\calAR,O)$, where $\vAP = \vAP^\inp
\cup \vAP^\out$. In~fact, after observing trace~$\sigma$ ending in
state~$s$ of~$\calAR$, the~tester has no reason to give an input
$\vAP^\inp$ such that $(s,\vAP^\inp) \not \in \coreachinp(\calAR,O)$:
such an input would lead to a trace that is inconclusive, and
the objective would not be reachable regardless of~$I$.

A very simple test algorithm is the following: starting from the
initial state of the implementation, we~store in~$s$ the initial state
of~$\calAR$.  As~long as the trace being produced is active, we~select
uniformly at random an input valuation among $\{\vAP^\inp \mid
(s,\vAP^\inp) \in \coreachinp(\calAR,O)\}$.  We~observe the output
$\vAP^\out$ given by~$I$, and update~$s$ as $\Post_{\calAR}(s,
\vAP^\inp \cup \vAP^\out)$.  There are three cases when this process
stops:
\begin{itemize}
\item if~$s \in O$, then we stop and report the generated trace as a
  covering trace for~$O$;
\item if $s\in \error_{\calAR}$, we~also stop and report a failure;
\item if the current trace is inconclusive, then we reset~$I$, set $s$
  to the initial state of $\calAR$, and start again.
\end{itemize}
Note that it is possible to generate inconclusive traces
since $\calAR$ is not assumed to be output-deterministic.
It is then possible to have $(s,\nu^\inp) \in \coreachinp(\calAR,O)$ and for some
$\vAP^\out$ and~$\vAP'^\out$, ${\Post_{\calAR}(s, \vAP^\inp \cup \vAP^\out)} \in
\coreachinp(\calAR,O)$ and ${\Post_{\calAR}(s, \vAP^\inp \cup \vAP'^\out)}
\not \in \coreachinp(\calAR,O)$ (see~e.g., Fig.~\ref{fig-exaut}).
Some conformant implementation~$I$ can indeed return $\vAP'^\out$, producing an inconclusive trace. 

Let this test algorithm be called $\uniformtc$. For any bound $K$, let
$\uniformtc_K$ be the uniform testing algorithm in which we stop each run after
$K$ steps, so that the generated traces have length at most~$K$.
This algorithm is almost-surely complete:
\begin{lemma}\label{lemma-complete-uniform}
  For each requirement set~$\calR$, implementation~$I$, and test objective~$O$,
  there exists~$K>0$ such that
  if $O$ is reachable in $\calAR \otimes I$, then $\uniformtc_K$ finds a covering trace with probability~$1$.
\end{lemma}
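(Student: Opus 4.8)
The plan is to reduce the statement to a Borel--Cantelli-type argument over the successive attempts that $\uniformtc_K$ performs between two resets. I~would isolate a single constant $p>0$ and a single bound~$K$ such that \emph{every} attempt, launched from the reset configuration (the pair of initial states of $\calAR$ and~$I$), reaches~$O$ within $K$~steps with probability at least~$p$. Since distinct attempts use fresh, independent random choices and start from the same configuration, the probability that the first $n$ attempts all miss~$O$ is at most $(1-p)^n$, which tends to~$0$; hence a covering trace is produced with probability~$1$.

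First I~would fix $K$ and the target input sequence. Because $O$~is reachable in $\calAR\otimes I$, and because this product is \emph{deterministic} ($\calAR$~is complete and deterministic, and $I$~is output-deterministic), there is a shortest covering trace~$\sigma^\star$; I~would set $K$ to its length~$\ell$, which is finite and in fact bounded by the number of input-states of $\calAR\otimes I$ (a shortest path visits distinct states). Let $w^\star=\vAP^{\inp}_1\cdots\vAP^{\inp}_\ell$ be the input sequence underlying~$\sigma^\star$. Determinism is what makes this sequence useful: replaying $w^\star$ from the reset configuration reproduces $\sigma^\star$ exactly, since output-determinism of~$I$ forces the same outputs and hence the same trace reaching~$O$. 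Thus the event ``the attempt selects precisely the inputs $w^\star$'' is contained in the event ``the attempt reaches~$O$''.

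Next I~would lower-bound the probability of selecting~$w^\star$. Let $s_0,s_1,\dots,s_\ell$ be the input-states visited along~$\sigma^\star$, with $s_\ell\in O$. At~step~$i$, the algorithm draws uniformly from $\{\vAP^{\inp}\mid (s_i,\vAP^{\inp})\in\coreachinp(\calAR,O)\}$, a~set of size at most $2^{|\AP^\inp|}$. The key point is that the required input $\vAP^{\inp}_{i+1}$ always lies in this set: playing it leads, through $I$'s output, to~$s_{i+1}$, which sits on the path to~$O$ and hence in $\coreach(\calAR,O)$, so $(s_i,\vAP^{\inp}_{i+1})\in\coreachinp(\calAR,O)$ directly from the definition. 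Each correct draw therefore has probability at least $\rho:=2^{-|\AP^\inp|}>0$; conditioning step by step (correct draws keep the attempt on the path, so its state at step~$i$ is indeed~$s_i$, with the same available set) gives $\Pr[\text{the attempt plays }w^\star]\ge\rho^\ell=:p>0$, as wanted.

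The step I~expect to be the main obstacle is turning ``each attempt succeeds with probability $\ge p$'' into ``success with probability~$1$'', which requires that the algorithm keep generating independent attempts. Inconclusive attempts trigger a reset and a fresh independent trial, which is exactly what the argument needs; the delicate case is $s\in\error_{\calAR}$, where the description of $\uniformtc_K$ halts the whole process, so in principle an attempt could report a failure before any attempt ever realizes~$w^\star$. I~would handle this by first observing that the good event (replaying $w^\star$) visits no $\error$ state, so it is never pre-empted \emph{within} its own attempt; and then either restricting to the conformant case, where $\error_{\calAR}$ is unreachable and the i.i.d.\ Borel--Cantelli argument applies verbatim, or, for the general case, making precise the semantics by which the search continues after an error report and coupling the real process with the idealized sequence of i.i.d.\ trials. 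Pinning down this coupling, and checking that a halting error report does not deprive us of the covering trace, is where the proof needs the most care.
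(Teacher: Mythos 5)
Your argument is essentially the paper's own proof: fix a covering trace $\sigma$ of $\calAR\otimes I$, set $K=|\sigma|$, observe that each restart independently replays the input sequence of $\sigma$ with probability at least $2^{-|\AP^\inp|\cdot|\sigma|}$ (determinism of $\calAR$ and output-determinism of $I$ making the replay exact), and conclude from the infinitely many independent restarts. You are in fact more careful than the paper's one-line proof on two points it glosses over: checking that every input along $\sigma$ indeed lies in the sampled set $\{\vAP^\inp \mid (s,\vAP^\inp)\in\coreachinp(\calAR,O)\}$, and the corner case where an error trace halts the whole process before $\sigma$ is ever replayed --- the paper silently assumes the process keeps restarting \emph{ad infinitum} (treating an error report as a satisfactory outcome), which is exactly the semantic clarification your last paragraph asks for.
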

\begin{proof}
  Assume there exists a covering trace $\sigma$ in~$\calAR\otimes I$,
  and let~$K$ be the length of~$\sigma$.
  When playing $\uniformtc_K$ \emph{ad~infinitum}, the~testing process
  restarts an infinite number of times.  At~each step, the~algorithm
  picks each valuation of the input variables with
  probability~$2^{-|\AP^\inp|}$.  So at each restart, the probability
  of choosing exactly $\sigma$ is $2^{-|\AP^\inp|\cdot|\sigma|}$.
  Therefore, $\sigma$~is picked eventually with probability~$1$.
\end{proof}
Note that there is no need to fix~$K$. Any algorithm that ensures that $K$ is increased towards
infinity finds a covering trace with probability 1.
Furthermore, the uniform distribution can also be relaxed: any algorithm that picks each input
of $\{\vAP^\inp \mid
(s,\vAP^\inp) \in \coreachinp(\calAR,O)\}$ with probability at least a fixed value $\epsilon>0$
also has this property.

\subsection{Testing Based on Reinforcement Learning}\label{sec-mcts}
The online-testing problem can be seen as a reinforcement-learning~(RL) problem as follows.
The considered system is the implementation~$I$, seen as a one-player deterministic game.
The goal is to find a sequence of inputs that guides the system to a given objective.
We~assign a reward to each trace: a~covering trace has reward~$0$, other traces have reward~$1$.
Note that we will consider \emph{minimizing} the reward for reasons that will be clear later.

Notice that we do not assign a particular reward to error traces. In~fact, we~assume that the goal of the tester
is to produce a covering trace, while monitoring all traces seen on the way for~$\calR$.
If~an~error trace is seen, then we simply report~it. %
Furthermore, it is possible to choose an objective in $\error_{\calAR}$ in which case the test strategy will try to reach an error state.

Reinforcement learning is a set of techniques that can be used to
learn strategies that maximize the reward in
games~\cite{sutton2018reinforcement}. In this paper, we use Monte
Carlo Tree Search~\cite{coulom2006efficient,kocsis2006bandit}.

\subsubsection{Monte-Carlo Tree Search.}
Monte-Carlo Tree Search (MCTS) is a RL technique to search for good
moves in games. It~consists in exploring the available moves
randomly, while estimating the potential of each newly-explored move
and updating the potential of previously selected moves.

More precisely, MCTS builds a weighted tree of possible plays of the game
iteratively as follows:
\begin{description}
\item[Selection:] from the root of the tree, select moves, using a \emph{tree policy},
  until reaching a node where some move has not been
  explored; %
\item[Expansion:] add a new child corresponding to that move;
\item[Simulation:] simulate a random play, using a \emph{roll-out policy}, from that new child; %
\item[Propagation:] assign the reward of that play to the new child, and
  update the rewards of its ancestors accordingly.
\end{description}

Different {tree policies} can be used to pick the successive
moves during the selection phase, based on statistics obtained from
previous iterations. We use UCT (Upper Confidence bounds applied to Trees)~\cite{kocsis2006bandit}
which is standard in many applications.
At a given node of the tree, if $n$ denotes the number of total visits to this node, and $n_i$ the number of times the $i$-th child node is visited (corresponding to the $i$-th move from the parent node), and $r_i$ the current average reward of the $i$-th child, we define the score of the $i$-th child as $r_i + c\sqrt{{\ln(n)}/{n_i}}$ for some constant~$c$.
The UCT policy consists in choosing the child with the best score.
Intuitively, this score is the average reward $r_i$ biased in order to make sure that each child node is visited frequently enough. In fact, the second term 
of the score is only relevant when $n_i$ is small.
If the goal is to maximize the average score, then $c>0$, and the UCT policy picks the child node with the maximal score; if, as in this work, we want to minimize the reward, then one chooses $c<0$ and the policy picks the child node with minimal score.

Once an unvisited action has been selected and
the tree has been expanded with a new node, a~{roll-out policy} is
applied to evaluate the potential of that new action, usually by
randomly selecting inputs that form a path from the resulting configuration. This
evaluation gives a first reward to the newly created node, which is
back-propagated to all its predecessors in the tree; %
each node of the
tree stores statistics from previous rounds, including the number of
visits and its average reward.

In the limit, the procedure is guaranteed to provide the optimal reward values for each state and move. In practice, the procedure can be interrupted at any time (depending on the
available resources devoted to the search), and the current best moves
from all states of the tree provide a strategy.

In our case, each simulation is bounded by~$K$ steps.
Such a bound is necessary since some simulations might never reach the objective, an error, or an inconclusive state and thus never terminate.
Last, we consider $\uniformtc_K$ (from Section~\ref{section:uniform}) as the roll-out policy. Note that choosing the inputs uniformly in the simulation phase is standard.
Here, we simply improve this by sampling over inputs that remain in the coreachable set. 

\subsubsection{Reward Shaping: Accelerating Convergence.}
\label{section:reward-shaping}
One technique that is used to help reinforcement-learning algorithms converge faster is \emph{reward shaping}~\cite{ng1999policy},
which consists in assigning real-valued rewards to traces, to give more information than just the binary 0/1.
For instance, if~the~trace induced a run in~$\calAR$ that became very close to the objective, then it~might be given a better (lower) reward
than another trace whose run was very~far.
The~computation of such rewards based on automata objectives were formalized in~\cite{camacho2019ltl}.
We~now describe how we apply this to our setting.

Here $\calAR$ is used solely to compute rewards of traces, while the actual testing will be done by the MCTS algorithm.
Let $C_0=O$, and for $i\geq 1$, define 
\[{C_i = \Pre_{\calAR}(C_{i-1}) \setminus(C_0\cup\ldots\cup C_{i-1})}.\]
We have $\bigcup_{i\geq 0} C_i = \coreach(\calAR, O)$.
In fact, each $C_i$ is the set of states that are at distance $i$ from some state in~$O$ (in the sense that $\calAR$ contains a run of length~$i$
to~$O$).

We consider two ways of assigning rewards to simulation traces. Let $m$ be maximal such that $C_m\neq \emptyset$.
Let us define $C_{m+1}=S_{\calAR}\setminus\coreach(\calAR,O)$, that is, all states that are not coreachable.
We assign each trace $\sigma$ whose run in $\calAR$ ends in $s^\calR$ the reward $\lastreward(\sigma) = k$ if, and only~if,  $s^\calR \in C_k$.
Notice that this is well defined because the sets~$C_i$ are pairwise-disjoint and they cover all states.
Hence, the closer the trace to objective~$O$, the smaller its reward. A reward of 0 means that the state satisfies the objective.

The second reward assignment considers not only the last reward, but all rewards seen during the simulation, as follows.
Let $r_0,r_1,r_2, \ldots,r_{K-1}$ denote the sequence of rewards encountered
during simulation (these are the rewards of the prefixes of the trace $\sigma$).
If the length of the simulation was less than $K$, we simply repeat the last reward to extend this sequence
to size $K$. Then the reward of the simulation is given by
\[
  \discountedreward_\gamma(\sigma) = r_{K-1} \cdot \sum_{i=0}^{K-1} \gamma^i\cdot r_i,
\]
where $\gamma\in(0,1)$ is a discount factor.
Notice that this value is 0, and minimal, if and only if $r_{K-1}=0$.
Furthermore, while $r_{K-1}$ is the most important factor, the second factor means that we favor simulations
whose first reward values are smaller. This can in fact be seen as a weighted version of $\lastreward$, where the weight is smaller
if the simulation has small rewards in the first steps.

\subsubsection{Basic MCTS Testing Algorithm.}
This yields the second testing algorithm we consider which we call \emph{basic MCTS}.
The algorithm is complete in the following sense since MCTS with UCT ensures that each node and action in the tree will be picked infinitely often
in the limit.
\begin{lemma} The basic MCTS algorithm is \emph{complete}:  
  For each requirement set~$\calR$, implementation~$I$, and test objective~$O$,
  there exists~$K>0$ such that,
  if $O$ is reachable in $\calAR \otimes I$, then the basic MCTS
  with simulation bound~$K$ finds a covering trace.
\end{lemma}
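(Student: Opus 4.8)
The plan is to reduce the whole statement to the single fact quoted just before it: under the UCT tree policy, every node that is selected infinitely often eventually has all of its children created and each of them selected infinitely often. Granting this, completeness follows by a depth induction along one fixed covering trace. Using the hypothesis that $O$ is reachable in $\calAR \otimes I$, I first fix a shortest covering trace~$\sigma$ together with its input sequence $\vAP^\inp_1 \cdots \vAP^\inp_K$, and take the simulation bound to be its length~$K$. Since $I$ is input-complete, output-deterministic and resettable, the product $\calAR \otimes I$ is a deterministic (partially observable) automaton: an input sequence determines a unique trace, so the MCTS search tree is the explored part of a fixed infinite tree whose depth-$d$ nodes are input histories of length~$d$ and whose edges are the available coreachable input valuations. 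The node reached by following $\vAP^\inp_1 \cdots \vAP^\inp_K$ from the root lies at depth~$K$ and, by the choice of~$\sigma$, its $\calAR$-component belongs to~$O$; call it the \emph{target node}.

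The core of the argument is then an induction on depth. The root is selected on every iteration, hence infinitely often. Assuming the depth-$d$ ancestor of the target node (for $d<K$) is selected infinitely often, the input $\vAP^\inp_{d+1}$ keeps the run inside $\coreach(\calAR,O)$, so it is an available move at that node; by the UCT property its depth-$(d+1)$ child, which is again an ancestor of the target node, is created and selected infinitely often. Iterating up to $d=K$ shows that the target node is eventually created. At the expansion step that adds it, the algorithm observes that its $\calAR$-state is in~$O$ and reports the root-to-node path, which by output-determinism of~$I$ realizes exactly~$\sigma$, a covering trace. Here the bound~$K$ only needs to permit a root-to-leaf path of length~$K$, which is precisely what is required to create this node.

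The main obstacle is making the quoted UCT property precise in our variant, since we minimise with a negative constant~$c$ and use shaped, non-binary rewards; I would isolate it as a separate lemma. Suppose a node~$v$ is selected infinitely often while some available child is not, and let~$v'$ be a child selected only finitely often and of minimal eventual visit count among such children, so that $n_{v'}$ is eventually frozen while the parent count $n\to\infty$. Both reward schemes are bounded (by $m+1$ for $\lastreward$, and by a constant depending on $\gamma$ and~$m$ for $\discountedreward_\gamma$), so each average reward $r_i$ stays in a fixed interval, whereas $c\sqrt{\ln(n)/n_{v'}}\to-\infty$. For any sibling~$w$ the difference of scores equals $(r_{v'}-r_w) + c\sqrt{\ln n}\,(1/\sqrt{n_{v'}} - 1/\sqrt{n_w})$; since $1/\sqrt{n_{v'}}$ is a fixed positive constant that eventually strictly exceeds $1/\sqrt{n_w}$ for every infinitely-visited~$w$ and at least weakly exceeds it for every other frozen sibling (by minimality of $n_{v'}$, ties being harmless), this difference tends to~$-\infty$, so $v'$ has the strictly smallest UCT score for all large~$n$ and must be selected, a contradiction. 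One also checks that every child is eventually created at all: whenever selection reaches~$v$ while an unexpanded move remains, the expansion step creates one, and as there are only $2^{|\AP^\inp|}$ moves, $v$ is fully expanded after finitely many visits. The only delicate point is the boundedness of the shaped rewards driving the domination of the exploration term; the rest is bookkeeping, and I would keep the rate estimates short by invoking the standard Kocsis--Szepesv\'ari analysis and merely verifying that minimisation with $c<0$ and bounded rewards preserves it.
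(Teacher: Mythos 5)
Your proposal is correct and follows essentially the same route as the paper: the paper's entire justification is that UCT deterministically guarantees every node and action of the tree is selected infinitely often, so once the tree grows to depth $K$ it contains (and hence has executed) the covering trace. You simply make this rigorous — supplying the score-divergence proof of the UCT property (which the paper invokes as standard, and where your tie-handling for frozen siblings is the right fix) and the explicit depth induction along a fixed shortest covering trace — without changing the underlying argument.
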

Note that this algorithm is not just almost-surely complete, but also complete.
This is because the UCT policy deterministically guarantees that all nodes of the tree are visited infinitely often. Thus, when the depth of the tree becomes large enough, any covering trace will be part of it, thus will have been executed.
However, we do rely on estimated rewards to guide the search to ensure faster termination in practice.

The above lemma holds for both reward assignments $\lastreward$ and $\discountedreward_\gamma$.
Moreover, as for $\uniformtc$, it is possible not to fix~$K$, but increase it slowly towards infinity.

Note that the basic MCTS algorithm is also a baseline since it can be obtained by combining known results from the literature; similar algorithms have been considered \textit{e.g.} \cite{VRC-fates2006,DBLP:journals/stvr/KorogluS21}.

\section{Greedy Strategies and Improved Test Algorithms}
\label{section:greedy}
In this section, we describe our original test algorithms.
We consider a game-theoretic view of online testing, define particular strategies for the tester, and show how these can improve the basic MCTS approach.

Finding a trace of~$\calAR$ that reaches~$O$ can be seen as a turn-based game played in the automaton $\calAR$ between two players: the \emph{tester}, and the \emph{system}.
At each step, the tester provides an input valuation, and the system responds with an output valuation, and the game moves to a new state in $\calAR$.
In this game, the online testing algorithm defines the strategy used by the tester, while the system plays a fixed strategy determined by the implementation,
which is however black-box, thus unknown to the tester.

\subsection{Controllable Predecessor and Successor Operators.}
Suppose we are given requirements specified as the complete deterministic safety automaton
$\calAR=\tuple{S_\calAR, \sinit^\calAR, AP, T_\calAR, F_\calAR}$
with $\error_\calAR = S_\calAR\setminus F_\calAR$, and 
a test objective ~$O \subseteq S_\calAR^\inp$. 
We~consider the game~$(\calAR, O \cup \error_{\calAR})$
played between the~\emph{tester}, playing the
role of the environment, and the \emph{system}.
The~objective of the tester is to reach~$O$ or to reveal a non-conformance (that is, reach $\error_{\calAR}$); it~controls the input
valuations in~$2^{{\AP^\inp}}$, and observes the output valuations
in~$2^{{\AP^\out}}$ chosen by the system.

A basic notion in the study of turn-based games is that of \emph{controllable predecessors}.
The \emph{immediate controllable predecessors} of a set~$B\subseteq S^\inp_\calAR$ is the set
\[
\begin{array}[l]{l}
  \CPre_\calAR(B){=}\{s \in S^\inp \mid  \exists {\vAP}^\inp {\in} 2^{\AP^{\inp}}.\ \forall {\vAP}^\out {\in} 2^{\AP^{\out}}.\\
  \Post_{\calAR}(s,{\vAP}^\inp\cup {\vAP}^\out) \in B \cup \error_\calAR \}.
\end{array}
\]
Thus, from each state~${s \in \CPre_\calAR(B)}$, the~tester can select some valuation~${\vAP}^\inp$ such that
whatever the output $\vAP^{\out}$ returned by the system, we~are guaranteed to either enter a state in~$B$, or exhibit a non-conformance.
We~say that the tester \emph{can ensure} entering~$B$ or reveal a non-conformance in~one~step (regardless of the system's strategy).

We define the set of \emph{controllable predecessors} of a subset $B$ as 
$\CPre_\calAR^*(B)= \lfp(\lambda X.(B  \cup \CPre_\calAR(X))$.
Using the same
reasoning as previously, this least fixpoint defines the set of all states
from which the tester can ensure either entering~$B$ within a finite number of
steps, or reveal a non-conformance, regardless of the system's strategy.

\begin{example}
In Figure~\ref{figW_i}, we have $\CPre_\calAR(O)=\{s_0, s'_0 \}$ and \linebreak $\CPre^*_\calAR(O)=\{s_0, s'_0 \} \cup O$.
\end{example}

A~\emph{strategy} of this game is a function $\st\colon \Tr(\calAR) \to
2^{2^{{\AP^\inp}}}$ that associates with each trace a subset of input
valuations, those that can be applied at the next step after this trace.
A~strategy~$\st$ is said \emph{memoryless} whenever, for any two
traces~$\sigma$ and~$\sigma'$ reaching the same state in $\calAR$, it~holds $\st(\sigma)=\st(\sigma')$.

A~trace $\sigma=\vAP_1\cdot\vAP_2\cdots\vAP_n$ is \emph{compatible} with a  strategy~$\st$ from a state~$s$ if 
there exists a sequence of states $q_0,q_1,q_2,\ldots,q_n$ such that ${q_0 = s}$ and  for all~$0\leq i < n$,
$q_i\xrightarrow{\vAP_{i+1}} q_{i+1}$, and $\vAP_{i+1}^{\inp} \in \st(\vAP_1\cdots\vAP_i)$.
We~write \linebreak $\Outcome(\st,s)$ to denote the set of all traces compatible with~$\st$ from~$s$,
also called the \emph{outcomes} of~$\st$ from~$s$.

\subsection{Winning and Cooperative Strategies}
For a given set $B\subseteq S^\inp$, a strategy~$\st$ for the tester is \emph{$B$-winning} from state~$s$
if all its outcomes from~$s$ eventually reach~$B \cup \error_\calAR$.
A~state~$s$ is $B$-winning if the tester has a winning strategy from~$s$.
The set of winning states can be computed using~$\CPre^*$:

\begin{lemma}
  \label{lemma:mps}
  For all~$B\subseteq S^\inp$, there exists a strategy $\st$
  such that from all states~$s \in \CPre_\calAR^*(B)$, all $\Outcome(\st,s)$ eventually reach
  $B \cup \error_\calAR$.
\end{lemma}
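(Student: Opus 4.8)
The idea is to read a memoryless strategy directly off the fixpoint iteration defining $\CPre_\calAR^*(B)$, and to certify it by an attractor-style rank argument, mirroring the earlier analysis of $\Pre_\calA^*$ via the sequence $(V_i)$. Concretely, I would set $W_0 = \emptyset$ and $W_{i+1} = B \cup \CPre_\calAR(W_i)$, so that $(W_i)_{i\in\bbN}$ is non-decreasing with $\bigcup_i W_i = \CPre_\calAR^*(B)$. For every $s \in \CPre_\calAR^*(B)$ define its \emph{rank} $\rho(s) = \min\{i : s \in W_i\}$, which is a finite positive integer.

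\textbf{Construction.} For a state $s$ with $\rho(s) = k$ and $s \notin B \cup \error_\calAR$, membership $s \in W_k = B \cup \CPre_\calAR(W_{k-1})$ forces $s \in \CPre_\calAR(W_{k-1})$, so by definition of $\CPre_\calAR$ there is an input $\vAP^\inp \in 2^{\AP^\inp}$ such that $\Post_\calAR(s, \vAP^\inp \cup \vAP^\out) \in W_{k-1} \cup \error_\calAR$ for \emph{every} output $\vAP^\out \in 2^{\AP^\out}$. I would fix one such input and let $\st$ return the singleton containing it on every trace ending in $s$; on states of $B \cup \error_\calAR$ and on states outside $\CPre_\calAR^*(B)$ the value of $\st$ is irrelevant and can be set arbitrarily. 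The resulting $\st$ is memoryless by construction. Here I crucially use that $\calAR$ is deterministic and complete, so that $\Post_\calAR(s, \vAP)$ is a single well-defined successor and the universal quantification over $\vAP^\out$ genuinely ranges over all moves available to the system.

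\textbf{Correctness.} The key claim, proved by induction on $k = \rho(s)$, is that every outcome in $\Outcome(\st, s)$ visits $B \cup \error_\calAR$ within $k$ steps. In the base case $k = 1$ we have $s \in W_1 = B \cup \CPre_\calAR(\emptyset)$: either $s$ already lies in $B \cup \error_\calAR$, or the input prescribed by $\st$ forces $\Post_\calAR(s, \vAP^\inp \cup \vAP^\out) \in \error_\calAR$ for all $\vAP^\out$, reaching the target in one step. For the inductive step $k \geq 2$, $\st$ plays the prescribed input, and whatever output the system picks the successor $s'$ lies in $W_{k-1} \cup \error_\calAR$; if $s' \in \error_\calAR$ we are done in one step, and otherwise $\rho(s') \leq k-1$, so the induction hypothesis yields an outcome from $s'$ reaching $B \cup \error_\calAR$ in at most $k-1$ further steps, hence at most $k$ from $s$. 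Since every $s \in \CPre_\calAR^*(B)$ has finite rank, this establishes the lemma.

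\textbf{Main obstacle.} There is no deep difficulty: this is a textbook least-fixpoint/attractor argument, and the statement is essentially the strategic reading of the $\CPre_\calAR^*$ fixpoint already announced in the surrounding text. The only points demanding care are bookkeeping ones: treating $B \cup \error_\calAR$ uniformly as the target throughout, so that absorbing error states count as ``reached'' and automatically receive rank $1$; and verifying that the witnessing input chosen at rank $k$ drives \emph{all} outputs into $W_{k-1} \cup \error_\calAR$, which is exactly the $\exists\vAP^\inp\,\forall\vAP^\out$ quantifier structure of $\CPre_\calAR$. Determinism and completeness of $\calAR$ are what make the successor unique and the construction unambiguous.
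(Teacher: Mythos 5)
Your proposal is correct and follows essentially the same route as the paper: both read a memoryless strategy off the fixpoint iterates of $\lambda X.(B \cup \CPre_\calAR(X))$, assign each state the least index (rank) at which it enters the iteration, and play an input witnessing membership in $\CPre_\calAR$ of the previous iterate, so that every output lands in a lower rank or in $\error_\calAR$. The only differences are cosmetic --- the paper's iterates are $C_i = B \cup C_{i-1} \cup \CPre_\calAR(C_{i-1})$ rather than your $W_{i+1} = B \cup \CPre_\calAR(W_i)$, and the paper keeps all witnessing inputs where you fix one --- while your explicit induction on rank merely spells out the termination argument the paper leaves implicit.
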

\begin{proof}
  Consider the computation of 
  \[\CPre_\calAR^*(B)= \lfp(\lambda X.(B \cup \CPre_\calAR(X))),\]
  and let $C_0,C_1,\ldots,C_n$ denote the iterates such that $C_0=\emptyset$,
  and $C_i = B \cup C_{i-1} \cup \CPre(C_{i-1})$ for~$i\geq 1$.
  For each state $s\in \CPre_\calAR^*(B)$, let~$i_s$ denote the least index such that
  $s \in C_i(B)$.
  We define $\st(s)=\{ \nu^\inp \mid \forall \nu^\out, 
  \Post_{\calAR}(s,{\vAP}^\inp\cup {\vAP}^\out) \in C_{i_s-1} \;\cup\; \error_\calAR \}$.
\end{proof}

The strategy $\st$ provided by Lemma~\ref{lemma:mps} is a \emph{winning  strategy}
in the game $(\calAR, B \cup \error_{\calAR})$.

One could look for a winning strategy in $(\calAR, O \cup \error_{\calAR})$ in order to use it as a test algorithm.
This approach was considered in some works (\eg, in~\cite{DLLN08}).
However, in most cases, $\CPre_\calAR^*(O)$ does not contain the initial state; so such a strategy cannot be applied from the beginning.
In terms of game theory, this means that the tester does not have a winning strategy from the initial state.

\paragraph{Cooperative Steps.}
For given set $B\subseteq S^\inp$, consider $\Pre_\calAR(B)$, the set of immediate predecessors of~$B$:
by definition from these states, there exists a pair of valuations $(\nu^\inp, \nu^\out)$ for which the successor state is in $B$.
From these states the tester cannot always guarantee reaching~$B$ in one step; however, it can choose an input valuation
$\nu^\inp$ for which \emph{there exists} $\nu^\out$ which moves the system into~$B$. In fact, when attempting to reach~$B$,
if the current state is not in $\CPre_{\calAR}^*(B)$, then choosing such a $\nu^\inp$ is a good choice.

Let us call a strategy $\st$ \emph{$B$-cooperative} if for all $s\in \Pre_{\calAR}(B)$, and all $\nu^\inp \in \st(s)$,
there exists $\nu^\out$ such that  $\Post_{\calAR}(s,{\vAP}^\inp\cup {\vAP}^\out) \in B$.

\begin{example}
  In Fig.~\ref{figW_i}, in $s^c_0$ the $O$-cooperative strategy $\st$ is represented by a bold arrow.
Choosing this input valuation may lead to $O$ at the next step if the system ``cooperates'' by choosing the right output valuation, while choosing the input valuation outside $\st(s)$ surely leads outside $O$ in the next step, for any output valuation.
\end{example}

\medskip
In the rest of this section, we will combine winning and cooperative strategies to 
define \emph{greedy} strategies, which can be seen as best-effort strategies
that can be employed when there are no winning strategies from the initial state.
These guarantee progress towards~$O$ in~$\calAR$ only against some system strategies.
We~will then show how to obtain $\epsilon$-greedy strategies that can be applied against any system strategy,
and use these as heuristics to improve over the basic MCTS test algorithm.

\subsection{Greedy Strategy}

\begin{figure*}[htb]
  \centering
  \includegraphics[width=12.5cm]{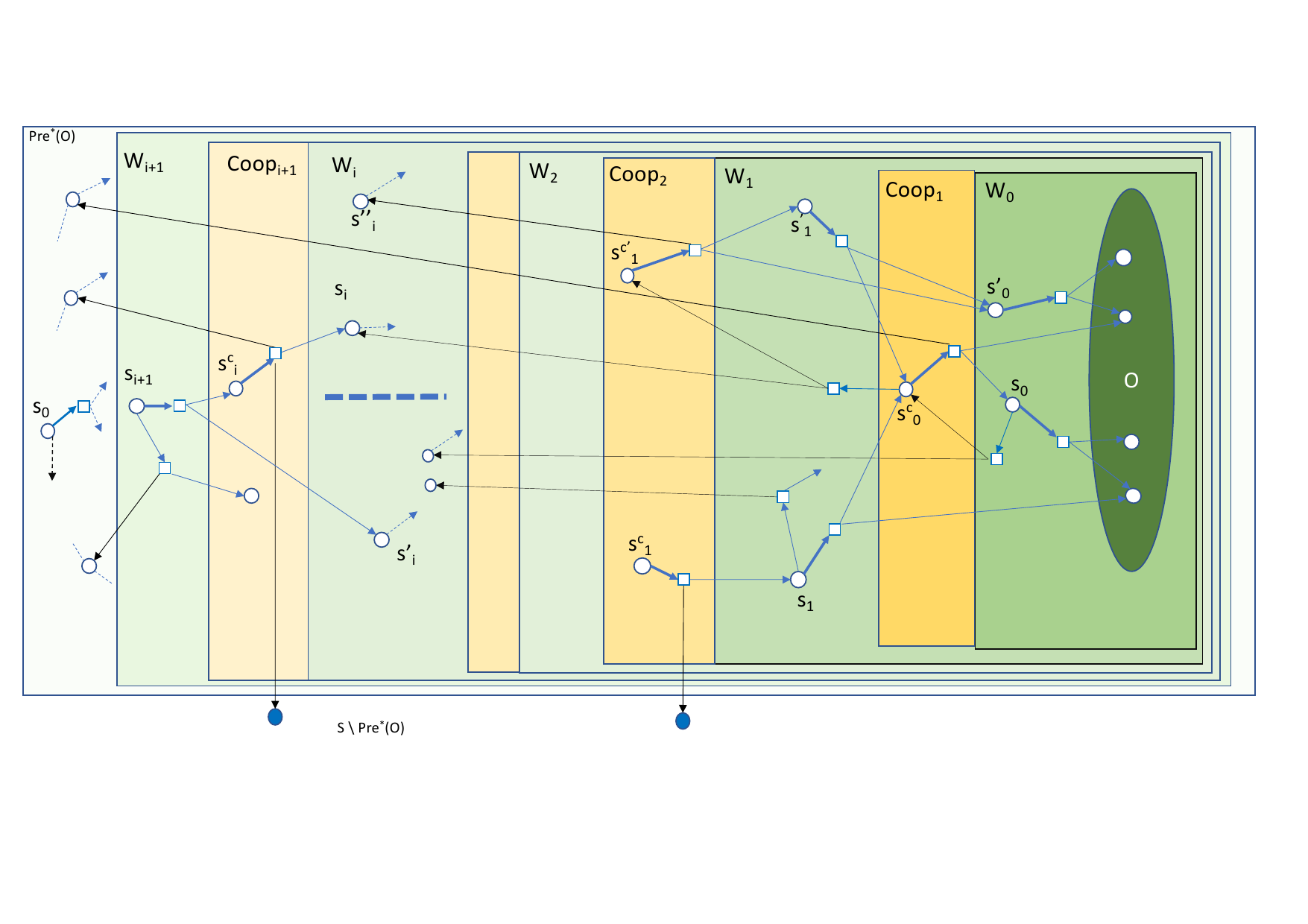}
  \caption{Construction of the $(W_i)_i$ hierarchy and cooperative strategies}
  \label{figW_i}
\end{figure*}

Consider~${W_0= \CPre_\calAR^*(O)}$, which is non-empty 
since it contains~$O$. 
Let~$\st_0$ be a $O$-winning strategy given by Lemma~\ref{lemma:mps}.
If $s_\init$ belongs to~$W_0$, then 
$\st_0$ ensures reaching~$O \cup \error_{\calAR}$ from~$s_\init$ against all system strategies.
In this case, we stop and return $\st_0$ as the greedy strategy.

The interesting and more frequent case is when $s_\init \not \in W_0$.
In this case, we~inductively define an increasing sequence of sets of states 
$W_0,W_1,\ldots,W_n$
and corresponding strategies such that $W_n$ is the set of all coreachable states.

Consider~$i\geq 0$, and assume that the sequence has been defined until index $i$.

Let~$\Coop_{i+1}= \Pre_\calAR(W_i) \setminus W_i$ 
be the set of immediate predecessors of $W_i$ deprived of states that have been already seen.
Define $\st^c_{i+1}$ as a $W_i$-cooperative strategy.
Then, let $W_{i+1} = \CPre_\calAR^*(\Coop_{i+1} \cup W_i)$, and consider
a corresponding winning strategy $\st^w_{i+1}$.
Let us denote by $\st_{i+1}$ the pointwise union of the strategies $\st^w_{i+1}$ and $\st^c_{i+1}$:
for all~$s\in S^\inp$, if $s\in \Coop_{i+1}$, then $\st_{i+1}(s) = \st^c_{i+1}(s)$;
if $s \in W_{i+1}\setminus \Coop_{i+1}$, then $\st_{i+1}(s) = \st^w_{i+1}(s)$; and $\st_{i+1}$ is defined arbitrarily otherwise.

We stop this sequence whenever $\Coop_i$ becomes empty.
Notice that we have \[\coreach(\calAR,O)=\Pre_\calAR^*(O) = \bigcup_i W_i,\]
and that the sequence $(W_i)_i$ is increasing.

The construction thus builds an increasing hierarchy $(W_i)_i$ of larger and larger sets, where at each level $i$,
cooperation is needed in states of $\Coop_i$ to get closer to~$O$ in this hierarchy.
Because all states of $\coreach(\calAR,O)$ belong to some $W_i$,
in~particular, if $O$ is reachable from~$s_\init$ (\ie, $s_\init \in\coreach(\calAR,O)$),
there exists some index~$k$ such that $s_\init$ in $W_k$.
For a state $s$, we call the \emph{rank} of $s$ and note $\rank(s)$ the smallest index $i$ such that $s \in W_i$.

We denote by $\stgreedy$ the strategy defined by 
$\stgreedy(s) = \st_{\rank(s)}(s)$ for all $s \in \coreach(\calAR,O)$;
and arbitrarily for other states.
We call this the \emph{greedy strategy}.

Notice that none of the winning strategies ($\st^w_i$ or $\st_0$) composing $\st$ contains loops.
However, since $\st$ is also composed of cooperative strategies, loops may occur due to outputs reaching states with higher ranks in the $(W_i)_i$ hierarchy. In~other terms, $\stgreedy$ does not guarantee reaching~$O\cup\error_{\calAR}$; and it may induce infinite loops against some system strategies.

The construction is illustrated in Fig.~\ref{figW_i}.
States in $S^\inp$ where the tester plays are represented by circles,
while transient states in $S^\out$ where the system plays are represented by squares. 
In a state $s_i$ in $W_i$, the strategy $\st_i$ is illustrated by bold blue arrows. 
Black arrows represent those outputs that either reach a state in a set $W_k$ with higher rank
or reach $S \setminus \coreach(\calAR,O)$.
For~example, in~${s_0  \in W_0}$,
the~input in bold is winning since all subsequent outputs reach~$O$.
But~the~other input is not winning since one possible output loops back in~$s^c_0$ thus in~$W_1$, and the other one goes back to some higher rank~$i$.
A~different situation is illustrated by~${s^c_1}$ in~$\Coop_2$: after some input in the cooperative strategy,
one output may reach~$W_1$,
but the other one reaches a state in~${S \setminus \coreach(\calAR,O)}$. 

Intuitively, the strategy $\stgreedy$ requires minimal cooperation from the system to reach~$O$ or $\error_\calAR$ from $\sinit$. 

\begin{example}
  Consider the automaton of Fig.~\ref{fig-exaut} again.
  Here we have $W_0 = \{o\}$, $\Coop_0=\{s_0,s_1\}$
  while $\stgreedy(s_0) =\{a\}$ and $\stgreedy(s_1) =\{b,c\}$.
  The implementation given in Fig.~\ref{fig:impl-bad-for-greedy} admits
  a covering trace which starts with $(b,0)$. However, because 
  $b \not \in \stgreedy(s_0)$, this trace cannot be found by the greedy strategy.
  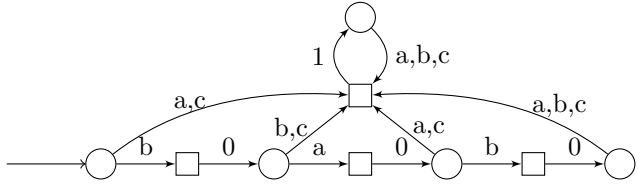
\begin{figure*}[ht]
    \centering
    \begin{tikzpicture}[xscale=2.3, yscale=1.3,initial text={}]
      \draw (0,0) node[initial,rond4] (s0) {} node {}; 
      \draw (0.5,0)  node[carre3] (s0out) {} node {}; 
      \draw (1,0) node[rond4] (s1) {} node {}; 
      \draw (1.5,0)  node[carre3] (s1out) {} node {}; 
      \draw (2,0) node[rond4] (s2) {} node {}; 
      \draw (2.5,0)  node[carre3] (s2out) {} node {}; 
      \draw (3,0) node[rond4] (s3) {} node {}; 
      \draw (1.5, 0.7) node[carre3] (todummy) {};
      \draw (1.5, 1.5) node[rond4] (dummy) {};

      \begin{scope}[-latex']
        \draw 
          (s0) edge node[above]{b} (s0out)
          (s0out) edge node[above]{0} (s1)
          (s1) edge node[above]{a} (s1out)
          (s1out) edge node[above]{0} (s2)
          (s2) edge node[above]{b} (s2out)
          (s2out) edge node[above]{0} (s3)
          (s0) edge[bend left] node[left] {a,c~} (todummy);
        \draw
          (s3) edge[bend right] node[right] {~~a,b,c} (todummy)
          (todummy) edge[bend left] node[left] {1} (dummy)
          (dummy) edge[bend left] node[right] {a,b,c} (todummy)
          (s1) edge node[left]{b,c} (todummy)
          (s2) edge node[right]{a,c} (todummy)
          ;
      \end{scope}
    \end{tikzpicture}
    \caption{An implementation 
    which contains the trace $(b,0), (a,0), (b,0)$ covering $\{o\}$
    in the automaton of Fig.~\ref{fig-exaut}. This is the only covering trace.
    }
    \label{fig:impl-bad-for-greedy}
  \end{figure*}

\end{example}

As the previous example shows, the greedy strategy
does not always guarantee progress towards the objective at each
step; we rather see it as a best effort heuristic which might or might not help
reaching the objective.
We use the greedy strategy in different ways to improve online testing as explained next.

\subsection{Purely greedy and $\epsilon$-greedy test algorithms}
The greedy strategy can be used to define a randomized test algorithm, called the \emph{pure greedy} algorithm, and denoted $\greedytc$, as follows.
Recall that after an active trace $\sigma$, the automaton $\calAR$ is in a state $s_\sigma \in \coreach(\calAR,O)$.
In the uniform strategy an input $\vAP^\inp$ is chosen uniformly in  $\{\vAP^\inp \mid (s_\sigma,\vAP^\inp) \in \coreachinp(\calAR,O)\}$.
A simple modification consists in replacing this choice with a uniform choice in $\st(s_\sigma)$ to restrict the domain to the transitions selected by the greedy strategy.
If $s_\sigma$ is in $W_i$ for some $i$, following this strategy will inevitably lead either to $\Coop_i$ if $i>0$, and to $O$ if $i=0$, or possibly to $\error$ if $I$ is non-conformant.
Once in $\Coop_i$, the tester uses $\stgreedy$ because there is a possibility of entering $W_{i-1}$;
but the implementation, even if conformant, is not forced to be cooperative, and may go back to some $W_j$ with $j\geq i$.
For $K>0$, let $\greedytc_K$ denote the test algorithm obtained from $\greedytc$ by stopping each run after~$K$ steps, and restarting another run.

The algorithm $\greedytc_K$ may not be almost-surely complete for any~$K>0$ as we already saw.
However, we~can obtain a almost-surely complete algorithm 
$\epsgreedytc$ simply by randomizing between $\uniformtc$ and $\stgreedy$:
given $0<\epsilon<1$, this algorithm %
uses, at each state $s_\sigma$, 
$\stgreedy(s_\sigma)$ with probability $1-\epsilon$; and $\uniformtc$ with probability $\epsilon$.

In fact, if $O$ is reachable in $\calAR \otimes I$, say, within $k<K$ steps,
then, at each run, there is a positive probability that the $\epsgreedytc_K$
executes $\uniformtc$ for $k$ steps, while picking the right inputs, also with positive probability. Repeating this experiment
makes sure that the right sequence will be chosen eventually with probability 1.

\subsection{The Greedy-MCTS Test Algorithm}
Here, we explain our main contribution which is an improvement
of the Basic MCTS test algorithm of
Section~\ref{sec-mcts} using the greedy strategy as heuristics both in the roll-out- and tree policies.
Our heuristic accelerates convergence: it favors
inputs that tend to get closer to the objective over those that do not,
and if possible, more rarely use inputs that may lead to inconclusive
states. %

The first modification we make is using 
$\epsgreedytc_K$
as the roll-out policy
instead of $\uniformtc_K$, for some given~$K$.

The second modification consists in using the greedy policy within the tree policy.
Fix a bound~$M>0$.
During the expansion phase, in a given node whose projection in $\calAR$ is $s$,
we restrict 
the UCT policy to the inputs 
of the greedy strategy $\stgreedy(s)$
at the first $M$ visits to that tree node;
after the first $M$ visits to a given node, we~fallback to the regular UCT policy, which covers
the whole set of input valuations $\{\vAP^\inp \mid (s_\sigma,\vAP^\inp) \in \coreachinp(\calAR,O)\}$. 
It should be noted that because the bound $M$ applies separately to each node, at any moment, there are always nodes (close to leaves) that have been explored less than $M$ times at which the tree policy is restricted to the inputs of the greedy strategy.

\begin{figure*}[ht]
  \centering
  \begin{tikzpicture}
    \begin{scope}[scale=0.5,label distance=2cm]
      \fill[gray!40] (0,0) -- (4,0) -- (4,1) -- (0,1) -- cycle;
      \fill[gray!90, pattern=north east lines] (3,0) -- (3,4) -- (4,4) -- (4,0) -- cycle;
      \draw[step=1,black,thin] (0,0) grid (4,4);
      \fill[black] (4,4) -- (4,0) -- (4.2,0) -- (4.2,4);
      \draw [very thick, decorate,decoration = {calligraphic brace,raise=3pt,amplitude=6pt}]
      (-0.1,0) -- node[left]{$k$~~~~~} (-0.1, 4);
      \node at (2,-1) {$\room_1$};
    \end{scope}
    \begin{scope}[scale=0.5,xshift=4.2cm]
      \fill[gray!40] (0,4) -- (4,4) -- (4,3) -- (0,3) -- cycle;
      \fill[gray!90, pattern=north east lines] (3,0) -- (3,4) -- (4,4) -- (4,0) -- cycle;
      \draw[step=1,black,thin] (0,0) grid (4,4);
      \fill[black] (4,4) -- (4,0) -- (4.2,0) -- (4.2,4);
      \node at (2,-1) {$\room_2$};
    \end{scope}
    \begin{scope}[scale=0.5,xshift=8.4cm]
      \fill[gray!40] (0,0) -- (4,0) -- (4,1) -- (0,1) -- cycle;
      \fill[gray!90, pattern=north east lines] (3,0) -- (3,4) -- (4,4) -- (4,0) -- cycle;
      \draw[step=1,black,thin] (0,0) grid (4,4);
      \fill[black] (4,4) -- (4,0) -- (4.2,0) -- (4.2,4);
      \node at (2,-1) {$\room_3$};
    \end{scope}
    \begin{scope}[scale=0.5,xshift=12.6cm]
      \node at (2,2) {$\ldots$};
    \end{scope}

    \begin{scope}[scale=0.5,xshift=18.8cm]
      \fill[black] (-0.2,0) -- (-0.2,4) -- (0,4) -- (0,0);
      \fill[gray!40] (0,4) -- (4,4) -- (4,3) -- (0,3) -- cycle;
      \fill[gray!90, pattern=north east lines] (3,0) -- (3,4) -- (4,4) -- (4,0) -- cycle;
      \draw[step=1,black,thin] (0,0) grid (4,4);
      \node at (2,-1) {$\room_{10}$};
    \end{scope}
  \end{tikzpicture}
    \caption{The passageway implementation of the rooms with $k=4$. The shaded area is the open area,
  and the hatched area is the doorstep area. The thick black lines show the doors.
  }
  \label{fig:passageway}
\end{figure*}
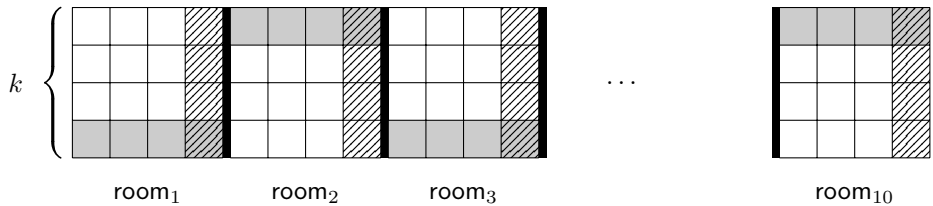

Because the restriction to the greedy inputs only holds for a finite number of visits, this does not affect the convergence guarantees of the MCTS algorithm.
Our heuristic is intended to improve the convergence of the test algorithm by introducing bias in probabilistic choices.
In fact, similar biased UCT scores have been used \textit{e.g.} in applications for the board game Go~\cite{gelly2011monte}.

\subsection{Non-Deterministic Implementations}
Although we restricted the presentation of the above algorithms to deterministic implementations, they all apply to
non-deterministic ones (that is, automata that are neither input-deterministic nor  output-deterministic).
If the implementation is finite-state and purely randomized, that is, if it can be modelled by
a finite-state Markov chain, then all algorithms apply with the same completeness guarantees.
On the other hand, if non-deterministic choices do not follow probability distributions, or cannot be modelled by a finite-state
Markov chain, then the algorithms do not have completeness guarantees. 
This is typically the case if the implementation is initialized in arbitrary state (e.g., due to states of the caches) and
possible initializations do not follow any particular probability distribution.
As usual in reinforcement learning, MCTS can still be applied and might converge or give useful results even though no theoretical
guarantees can be proven. Thus, all our algorithms can be applied in practice to non-deterministic implementations.
We~leave an empirical evaluation in such cases for future work.

\section{Case Study}

\subsection{Description of the System}
We consider a system controlling a robot moving in a discrete 2D grid environment made of N=10 rooms placed horizontally, a \emph{passageway}, each room being separated from the previous one by a door.
Each room except the first one has a door to its left, and each room except the last one has a door to its right.
The robot occupies a single discrete cell at any moment, and the \textbf{input} given to the system makes the robot move 
to one of the four diagonal neighboring cells. More precisely, the Boolean inputs are $\AP^\inp=\{\rightmove, \up\}$:
the robot moves right one step if $\rightmove$ holds; it moves left otherwise; it moves upwards if $\up$ holds; it moves downwards otherwise.
The doors at all rooms are initially closed. In order to move to the next room, the robot must first visit an area called \emph{open}, which opens the door, and another area called \emph{doorstep} which is made of cells neighboring the door and the next room. Leaving the open area makes the door close again.
Furthermore, moving towards a wall or a closed door causes a \emph{collision}.

An example of such an implementation is shown in Fig.~\ref{fig:passageway} where each room is modelled as a 4x4 grid.
The open area here is alternatively the lowest or the highest row, and the doorstep is the rightmost column in each of the first nine  rooms.
The door at the right of a room is open if, and only~if,  the robot is inside the shaded area; and the robot can actually
move to the next room (it is at the doorstep) if, and only~if,  it is also on the hatched area: so it can do so only at the bottom right cell of the first room;
and top right cell of the second one, etc.

The \textbf{outputs} of the system are the following: the identifier of the room the robot is currently occupying ($\{\room_1,\ldots,\room_{10}\}$),
whether the robot is in the \open{} and \doorstep{} areas, and whether it is currently in a \collision{}. 
Thus, the implementation does not output the precise position of the robot;
but only an indication about its position.

What is described in Figure~\ref{fig:passageway} is one possible implementation. Our objective is to write abstract requirements that can be used for testing various implementations.
A different implementation can use rooms of different sizes and shapes, have additional walls, or (deterministically) moving obstacles inside rooms,
and consider different dynamics for the moves of the robot.

We considered a requirement automaton that describes properties of this system, imposing constraints both on the environment and on the program that controls the robot. We enforce that it is only possible to move to the next room if the door is open and the robot is at the doorstep;
and that moving right when at \doorstep{} and \open{}, the next room is entered; while this is not the case when moving left from such a state.
We also impose that in rooms~$i$ with odd~$i$, open cannot be reached by going up; this means that it must be on the bottom-most part of the room.
In~rooms~$i$ with even~$i$, the~situation is reversed: one~cannot reach open by going down.
Fig.~\ref{fig:passageway_req} shows a part of the automaton representation of this requirement corresponding to the robot being in room~$i$.
We~are at state~$m_0$ when the robot is not in the open area; at~$m_1$ when it is in the open area but not at doorstep; and~at~$m_2$
when it is both in the open and doorstep areas. Intuitively, reaching the next room requires going from $m_0$ to $m_2$ (either directly, or via $m_1$).

\begin{figure*}[ht]
  \centering
  \begin{tikzpicture}[scale=1]
    \begin{scope}
      \draw (0,0) node[rond6] (m0) {} node {$m_0$}; 
      \draw (1.3,0) node[carre3] (m0nu) {} node {}; 
      \draw (1.3,-1.2) node[carre3] (m0u) {} node {}; 
      \draw (3,0) node[rond6] (m1) {}  node {$m_1$};
      \draw (5,1) node[carre3] (m1r) {}  node {};
      \draw (5,-1) node[carre3] (m1l) {}  node {};
      \draw (7,1) node[rond6] (m2) {}  node {$m_2$};
      \draw (8.5,1) node[carre3] (m2r) {};
      \draw (7,0) node[carre3] (m2l) {};
      \draw (5,-2.4) node[rond6] (err) {}  node {err};
      \node (nextroom) at (10,1) {$\ldots$};
      \node (prevroom) at (-1.5,0) {$\ldots$};
      \begin{scope}[-latex']
        \draw
        (prevroom) edge[-latex'] (m0)
        (m0) edge[bend left] node[below]{$\room_{i-1}$} (prevroom)
        (m0) edge node[below]{$\lnot \up$} (m0nu)
        (m0) edge node[right]{$\up$} (m0u)
        (m0nu) edge[-latex'] node[below] {$\open{\land}{\lnot\ds}$} (m1)
        (m0u) edge[bend right=20] node[below,pos=0.3] {$\open$} (err)
        (m0u) edge[bend left] node[below left] {$\lnot\open$} (m0)

        (m1) edge[bend right,-latex'] node[above] {$\lnot \open$} (m0)
        (m1) edge[-latex'] node[below] {\textsf{left}} (m1l)
        (m1) edge[-latex'] node[above] {\textsf{right}}  (m1r)
        (m1l) edge[-latex', bend left=60] node[below] {$\open \land \lnot \ds$} (m1)
        (m1r) edge[-latex', bend right=60] node[above] {$\open \land \lnot \ds$} (m1)
        (m1r) edge[-latex'] node[above] {$\open \land \ds$} (m2)
        (m1l) edge[-latex'] node[right] {$\ds$} (err)
        (m2) edge[-latex'] node[above] {\textsf{right}} (m2r)
        (m2) edge[-latex'] node[right] {\textsf{left}} (m2l)
        (m2r) edge[-latex'] node[above,pos=0.8] {$\room_{i+1}$} (nextroom) %
        (m2l) edge[-latex'] node[above,sloped,pos=0.3] {$\room_{i+1}$} (err)
        (m2l) edge[-latex'] node[below,sloped,pos=0.5] {$\open \land \lnot \ds$} (m1)
        (m2l) edge[-latex',bend left] node[left,pos=0.7] {$\open \land \ds$} (m2)
        (m2r) edge[-latex',bend left] node[right] {$\lnot\room_{i+1}$} (err)
        (err) edge[loop below] (err);
        ;
        \draw[dashed,-] (-0.5,-3) -- (-0.5,2) -- (9,2) -- (9,-3) -- cycle;
        \node at (5,2.5) {Requirement automaton: Room i};

      \end{scope}
    \end{scope}
  \end{tikzpicture}
  \caption{Part of the requirement automaton for the passageway example that corresponds to
  room i with odd $i\in(1,10)$. There is a similar structure to the left, and to the right. 
  Here, \textsf{left} is a shorthand for $\lnot\rightmove$; \ds is short for \doorstep.
  Some transitions lead directly to input states, without intermediate output states (such as the transition from $m_1$ to $m_0$):
  this means that the transition is independent from the valuations of variables not mentioned on the guard.
  Some guards and transitions are omitted for clarity: all transitions that do not mention
  any variable of the form $\room_j$ are assumed to be guarded by $\room_i$. In fact, this part of the automaton
  corresponds to the robot being in room i. From all states fresh transitions guarded by $\room_{i-1}$ go to the corresponding state
  to the left of the present figure. Whenever any other $\room_j$ with $j\neq i-1,i$ is set to 1, this leads to $\error$.
  Furthermore, from any state, if \collision holds, then we move to an absorbing state called \collision (not shown here).
  }
  \label{fig:passageway_req}
\end{figure*}
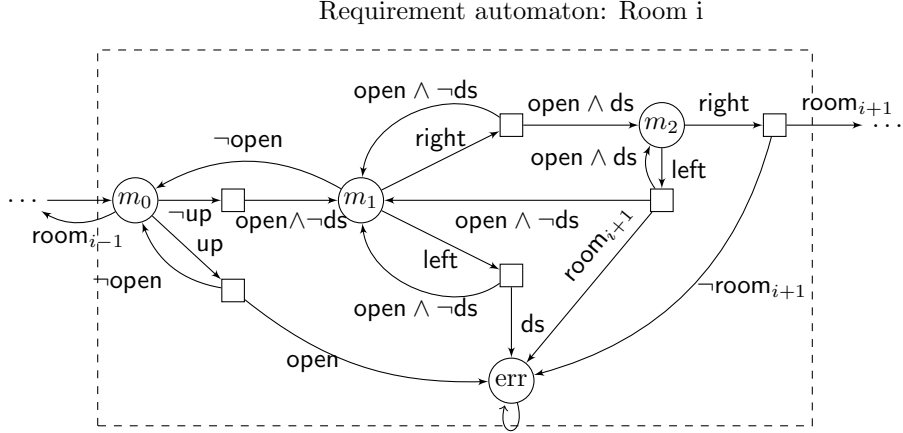

Let us illustrate the inputs chosen by the greedy strategy here.
First, consider the state $m_1$: choosing to go left can either go to error, to $m_0$, or to $m_1$; but none of the system outputs
can make state move closer to the next room. On the other hand, going right, possible outcomes are $m_0$, $m_1$, or $m_2$:
therefore, this is a cooperative state (belongs to some $\Coop_i$ in the computation of Section~\ref{section:greedy}),
and going \textsf{right} belongs to the greedy strategy.
A similar situation arises at state $m_2$: going right leads possibly to the next room
(although not surely due to collision outputs not shown on the figure), but going left definitely cannot make the state move to the next room.
Thus, the greedy strategy chooses to go right but not left.

To evaluate our test algorithms, we considered the implementation described above with the following non-conformance:
in~room~9, when the door is open, and at doorstep, moving right does not move the robot to the next room.
We considered the test objective of reaching any cell in the last room without ever being in a collision.

Note that although our implementation only has 160 states (16 possible positions in 10 rooms), finding an execution reaching the last room is particularly difficult since a collision occurs whenever the robot moves towards a wall, and this happens very quickly
in a uniform random test.

\subsection{Test Algorithm Implementation}
We implemented the test generation algorithm in Python where requirements are specified as deterministic finite automata
specified as Verilog modules. These requirement modules are automatically translated to the AIGER format
using \href{https://github.com/YosysHQ/yosys}{Yosys} and \href{https://github.com/berkeley-abc/abc}{ABC}),
which can be read by 
the \href{https://github.com/gaperez64/AbsSynthe}{Abssynthe} game solver.
We implemented the greedy strategy computation in Abssynthe,
but also the computation of the predecessor sequence $C_0,C_1,\ldots$ from Section~\ref{section:reward-shaping}
to compute rewards.
The testing tool moreover uses the \href{http://vlsi.colorado.edu/~fabio/CUDD/}{CUDD} BDD library, 
and \href{https://github.com/mvcisback/py-aiger}{pyAIGER}
to read and execute the greedy strategy computed by Abssynthe, and to compute rewards.
The tester communicates with the program under test via standard input and output.

\subsection{Experimental Results}
We compared several algorithms on the described case study. 
These include the baseline algorithms $\uniformtc_K$, $\epsgreedytc_K$ (with $\epsilon=0.25$), and the basic~MCTS.
We~allowed each test algorithm 50 attempts to reveal the bug. At~each attempt, we~made 10,000 runs; each~run starting from the initial state
and making $K=250$ steps. 
For~uniform and pure greedy algorithms, this meant that we made a total number of 500,000 runs, each making 250~steps.
For~the basic~MCTS, this~meant that we started from scratch 50 times,
and ran 10,000 runs, each with a roll-out of length~250.

\begin{table*}[t]
  \centering
  \begin{tabular}{|c|c|c|}
    \hline
    \bfseries Algorithm & \bfseries Success Rate & \bfseries Average runs \\
    \hline
    $\uniformtc_K$ & 0\% & -\\
    \hline
    $\epsgreedytc_K$ & 0\% &- \\
    \hline
    Basic MCTS & 0\% & -\\
    \hline
    MCTS + greedy roll-out & 62.7\% & 4662\\
    \hline
    MCTS + greedy tree \& roll-out & 100\% & 1031\\
    \hline
    \hline
    TorXakis &0\% & -\\
    \hline
  \end{tabular}
  \caption{Results of different algorithms on our case study.
  The success rate is the number of attempts that revealed the bug; while the average runs is the average number of runs made
  by each successful attempt before revealing the bug.
  }
  \label{table:results}
\end{table*}

The results of various algorithms on the described case study are given in Table~\ref{table:results}.
All three baseline algorithms failed at revealing the bug, and increasing the number of steps to $K=1000$ did not change the outcome.

The Greedy-MCTS tester was more successful. We considered two variants. In the first one, we used the standard UCT tree policy and the greedy policy for the roll-outs (greedy rollout).
Among the 50 attempts, this approach found a covering trace in 62.7\% of the cases. The bug was revealed after 4662 runs in average among successful attempts (each attempt was stopped whenever a covering trace is found or when 10,000 runs are made).
The second variant uses, moreover, the greedy policy inside the tree for the first M=30 visits at each node, and the UCT afterwards (greedy tree \& rollout).
The success rate was 100\%, with only 1031 runs in average.

\section{Conclusion}
We presented an algorithm for online testing of reactive programs with respect to automata-based requirements
based on an improvement of the Monte-Carlo Tree Search algorithm with heuristics. These heuristics are computed by a game-theoretic
view of testing. While the game point of view has been explored before, we use it to improve the reinforcement learning approach.
Our preliminary experimental results show that these heuristics can improve the testing time
by guiding the search quickly to
relevant states. This is especially the case when test objectives require long sequences that have low probability to be found by uniform random testing.

As future work, we plan to make a systematic study of this approach to evaluate its limits to usability in an industrial context.
Targeting particular applications such as GUI testing (as \cite{DBLP:journals/stvr/KorogluS21}) is a possibility since the particular forms of temporal logic requirements
might allow one to derive better heuristics tailored for the application at hand.
\onecolumn
\bibliographystyle{alpha}
\bibliography{biblio}

\iffinal

\end{document}

\fi

\newpage
\appendix
\input{appendix_final}
\end{document}